\newtheorem{theorem}{Theorem}
\newtheorem{lemma}[theorem]{Lemma} 
\newtheorem{proposition}[theorem]{Proposition}
\newtheorem{assumption}{Assumption}
\begin{document} 

\title{Austerity in MCMC Land: Cutting the Metropolis-Hastings Budget}
\date{}

\author[1]{Anoop Korattikara \thanks{akoratti@ics.uci.edu}}
\author[1,2]{Yutian Chen  \thanks{yutian.chen@eng.cam.edu}}
\author[1,3]{Max Welling \thanks{welling@ics.uci.edu}}
\affil[1]{Department of Computer Science, University of California, Irvine}
\affil[2]{Department of Engineering, University of Cambridge}
\affil[3]{Informatics Institute, University of Amsterdam}

\renewcommand\Authands{ and }

\maketitle

\begin{abstract} 
Can we make Bayesian posterior MCMC sampling more efficient when faced with very large datasets?
We argue that computing the likelihood for $N$ datapoints in the Metropolis-Hastings (MH) test to reach a single binary decision is computationally inefficient. We introduce an approximate MH rule based on a sequential hypothesis test that allows us to accept or reject samples with high confidence using only a fraction of the data required for the exact MH rule. While this method introduces an asymptotic bias, we show that this bias can be controlled and is more than offset by a decrease in variance due to our ability to draw more samples per unit of time.
\end{abstract}

\section{Introduction}

Markov chain Monte Carlo (MCMC) sampling has been the main workhorse of Bayesian computation since the 1990s. A canonical MCMC algorithm proposes samples from a distribution $q$ and then accepts or rejects these proposals with a certain probability given by the Metropolis-Hastings (MH) formula \citep{metropolis1953equation,hastings1970monte}. For each proposed sample, the MH rule needs to examine the likelihood of all data-items. When the number of data-cases is large this is an awful lot of computation for one bit of information, namely whether to accept or reject a proposal.

In today's Big Data world, we need to rethink our Bayesian inference algorithms. Standard MCMC methods do not meet the Big Data challenge for the reason described above. Researchers have made some progress in terms of making MCMC more efficient, mostly by focusing on parallelization. Very few question the algorithm itself: is the standard MCMC paradigm really optimally efficient in achieving its goals? We claim it is not.

Any method that includes computation as an essential ingredient should acknowledge that there is a finite amount of time, $T$, to finish a calculation. An efficient MCMC algorithm should therefore decrease the ``error" (properly defined) maximally in the given time $T$. For MCMC algorithms, there are two contributions to this error: bias and variance. Bias occurs because the chain needs to burn in during which it is sampling from the wrong distribution. Bias usually decreases fast, as evidenced by the fact that practitioners are willing to wait until the bias has (almost) completely vanished after which they discard these ``burn-in samples". The second cause of error is sampling variance, which occurs because of the random nature of the sampling process. The retained samples after burn-in will reduce the variance as $O(1/T)$.

However, given a finite amount of computational time, it is not at all clear whether the strategy of retaining few unbiased samples and accepting an error dominated by variance is optimal. Perhaps, by decreasing the bias more slowly we could sample faster and thus reduce variance faster? In this paper we illustrate this effect by cutting the computational budget of the MH accept/reject step. To achieve that, we conduct sequential hypothesis tests to decide whether to accept or reject  a given sample and find that the majority of these decisions can be made based on a small fraction of the data with high confidence. A related method was used in \citet{singh2012monte}, where the factors of a graphical model are sub-sampled to compute fixed-width confidence intervals for the log-likelihood in the MH test.

Our ``philosophy" runs deeper than the algorithm proposed here. We advocate MCMC algorithms with a ``bias-knob", allowing one to dial down the bias at a rate that optimally balances error due to bias and variance. We only know of one algorithm that would also adhere to this strategy: stochastic gradient Langevin dynamics  \citep{WellingTeh11} and its successor stochastic gradient Fisher scoring \citep{AhnKorattikaraWelling12}. In their case the bias-knob was the stepsize. These algorithms do not have an MH step which resulted in occasional samples with extremely low probability. We show that our approximate MH step largely resolves this, still avoiding $O(N)$ computations per iteration.

In the next section we introduce the MH algorithm and discuss its drawbacks. Then in Section~\ref{sec:approxMCMC}, we introduce the idea of approximate MCMC methods and the bias variance trade-off involved. We develop approximate MH tests for Bayesian posterior sampling in Section~\ref{sec:approxMH} and present a theoretical analysis in Section~\ref{sec:analysis}. Finally, we show our experimental results in Section~\ref{sec:experiments} and conclude in Section~\ref{sec:conclusion}.

\section{The Metropolis-Hastings algorithm}\label{sec:MHalgorithm}

MCMC methods generate samples from a distribution $\mathcal{S}_0(\theta)$ by simulating a Markov chain designed to have stationary distribution $\mathcal{S}_0(\theta)$. A Markov chain with a given stationary distribution can be constructed using the Metropolis-Hastings algorithm \citep{metropolis1953equation,hastings1970monte}, which uses the following rule for transitioning from the current state $\theta_t$ to the next state $\theta_{t+1}$:

\begin{enumerate}
\item
 Draw a candidate state $\theta'$ from a proposal distribution $q(\theta'|\theta_t)$
 \item
 Compute the acceptance probability:
 \begin{equation}
 P_a = \min \left[ 1, \frac{\mathcal{S}_0(\theta') q(\theta_t|\theta')}{\mathcal{S}_0(\theta_{t})q(\theta'|\theta_t)} \right] \label{eqn:p_a}
 \end{equation}
 \item
Draw $u \sim \text{Uniform} [0,1]$. If $u < P_a$ set $\theta_{t+1} \leftarrow \theta'$, otherwise set $\theta_{t+1} \leftarrow \theta_t$.
\end{enumerate}

Following this transition rule ensures that the stationary distribution of the Markov chain is $\mathcal{S}_0(\theta)$. The samples from the Markov chain are usually used to estimate the expectation of a function $f(\theta)$ with respect to $\mathcal{S}_0(\theta)$. To do this we collect $T$ samples and approximate the expectation $I = \langle f \rangle_{\mathcal{S}_0}$ as $\hat{I} = \frac{1}{T} \sum_{t=1}^T f(\theta_t)$. Since the stationary distribution of the Markov chain is $\mathcal{S}_0$, $\hat{I}$ is an unbiased estimator of $I$ (if we ignore burn-in).

The variance of $\hat{I}$ is $V = \mathbb{E} [ (\langle f \rangle_{\mathcal{S}_0} - \frac{1}{T} \sum_{t=1}^T f(\theta_t))^2]$, where the expectation is over multiple simulations of the Markov chain. It is well known that $V \approx \sigma^2_{f,\mathcal{S}_0} \tau / T$, where $\sigma^2_{f,\mathcal{S}_0}$ is the variance of $f$ with respect to $S_0$ and $\tau$ is the integrated auto-correlation time, which is a measure of the interval between independent samples  \citep{gamerman2006markov}. Usually, it is quite difficult to design a chain that mixes fast and therefore, the auto-correlation time will be quite high. Also, for many important problems, evaluating $\mathcal{S}_0(\theta)$ to compute the acceptance probability $P_a$ in every step is so expensive that we can collect only a very small number of samples ($T$) in a realistic amount of computational time. Thus the variance of $\hat{I}$ can be prohibitively high, even though it is unbiased.

\section{Approximate MCMC and the Bias-Variance Tradeoff}\label{sec:approxMCMC}

Ironically, the reason MCMC methods are so slow is that they are designed to be unbiased. If we were to allow a small bias in the stationary distribution, it is possible to design a Markov chain that can be simulated cheaply \citep{WellingTeh11,AhnKorattikaraWelling12}.  That is, to estimate $I = \langle f \rangle_{\mathcal{S}_0}$, we can use a Markov chain with  stationary distribution $\mathcal{S}_\epsilon$ where $\epsilon$ is a parameter that can be used to control the bias in the algorithm.  Then $I$ can be estimated as $\hat{I} = \frac{1}{T} \sum_{t=1}^T f(\theta_t)$, computed  using samples from $\mathcal{S}_\epsilon$ instead of $\mathcal{S}_0$.

As $\epsilon \to 0$, $\mathcal{S}_\epsilon$ approaches $\mathcal{S}_0$ (the distribution of interest)  but it becomes expensive to simulate the Markov chain. Therefore, the bias in $\hat{I}$ is low, but the variance is high because we can collect only a small number of samples in a given amount of computational time. As $\epsilon$ moves away from $0$, it becomes cheap to simulate the Markov chain but the difference between $\mathcal{S}_\epsilon$ and $\mathcal{S}_0$ grows. Therefore, $\hat{I}$ will have higher bias, but lower variance because we can collect a larger number of samples in the same amount of computational time. This is a classical bias-variance trade-off and can be studied using the risk of the estimator.

The risk can be defined as the mean squared error in $\hat{I}$, i.e.\ $R = \mathbb{E} [ (I- \hat{I})^2]$, where the expectation is taken over multiple simulations of the Markov chain. It is easy to show that the risk can be decomposed as $R = B^2 + V$, where $B$ is the bias and $V$ is the variance. If we ignore  burn-in, it can be shown that $B = \langle f \rangle_{\mathcal{S}_\epsilon} -\langle f \rangle_{\mathcal{S}_0}$ and $V =\mathbb{E} [ (\langle f \rangle_{\mathcal{S}_\epsilon} - \frac{1}{T} f(\theta_t))^2] \approx \sigma^2_{f,\mathcal{S}_\epsilon} \tau / T$.

The optimal setting of $\epsilon$ that minimizes the risk depends on the amount of computational time available. If we have an infinite amount of computational time, we should set $\epsilon$ to 0. Then there is no bias, and the variance can be brought down to $0$  by drawing an infinite number of samples. This is the traditional MCMC setting. However, given a finite amount of computational time, this setting may not be optimal. It might be better to tolerate a small amount of bias in the stationary distribution if it allows us to reduce the variance quickly, either by making it cheaper to collect a large number of samples or by mixing faster.

It is interesting to note that two recently proposed algorithms follow this paradigm: Stochastic Gradient Langevin Dynamics (SGLD) \citep{WellingTeh11} and Stochastic Gradient Fisher Scoring (SGFS) \citep{AhnKorattikaraWelling12}. These algorithms are biased because they omit the required Metropolis-Hastings tests. However, in both cases, a knob $\epsilon$ (the step-size  of the proposal distribution) is available to control the bias.  As $\epsilon \to 0$, the acceptance probability $P_a \to 1$ and the bias from not conducting  MH tests disappears. However, when $\epsilon \to 0$ the chain mixes very slowly and the variance increases because the auto-correlation time $\tau \to \infty$. As $\epsilon$ is increased from $0$, the auto-correlation, and therefore the variance, reduces. But, at the same time, the acceptance probability reduces and the bias from not conducting MH tests increases as well.

In the next section, we will develop another class of approximate MCMC algorithms for the case where the target $\mathcal{S}_0$ is a Bayesian posterior distribution given a very large dataset. We achieve this by developing an approximate Metropolis-Hastings test, equipped with a knob for controlling the bias. Moreover, our algorithm has the advantage that it can be used with any proposal distribution. For example, our method allows approximate MCMC methods to be applied to problems where it is impossible to compute gradients (which is necessary to apply SGLD/SGFS). Or, we can even combine our method with SGLD/SGFS, to obtain the best of both worlds.

\section{Approximate Metropolis-Hastings Test for Bayesian Posterior Sampling} \label{sec:approxMH}

An important method in the toolbox of Bayesian inference is posterior sampling. Given a dataset of $N$ independent observations $X_N = \left\{x_1,\dots,x_N\right\}$, which we model using a distribution $p(x;\theta)$ parameterized by $\theta$, defined on a space $\Theta$ with measure $\Omega$, and a prior distribution $\rho(\theta)$, the task is to sample from the posterior distribution $\mathcal{S}_0(\theta) \propto \rho(\theta) \prod_{i=1}^N p(x_i;\theta)$.

If the dataset has a billion datapoints, it becomes very painful to compute $\mathcal{S}_0(.)$ in the MH test, which has to be done for each posterior sample we generate. Spending $O(N)$ computation to get just $1$ bit of information, i.e.\ whether to accept or reject a sample, is likely not the best use of computational resources.

But, if we try to develop accept/reject tests that satisfy detailed balance exactly with respect to the posterior distribution using only sub-samples of data, we will quickly see the no free lunch theorem kicking in. For example, the pseudo marginal MCMC method \citep{andrieu2009pseudo}  and the method developed by \citet{lin2000noisy} provide a way to conduct exact accept/reject tests using unbiased  estimators of the likelihood. However, unbiased estimators of the likelihood that can be computed from mini-batches of data, such as the Poisson estimator \citep{fearnhead2008particle} or the Kennedy-Bhanot estimator \citep{lin2000noisy} have very high variance for large datasets. Because of this, once we get a very high estimate of the likelihood, almost all proposed moves are rejected and the algorithm gets stuck.

Thus, we should be willing to tolerate some error in the stationary distribution if we want faster accept/reject tests. If we can offset this small bias by drawing a large number of samples cheaply and reducing the variance faster,  we can establish a potentially large reduction in the risk.

We will now show how to develop such approximate tests by reformulating the MH test as a statistical decision problem. It is easy to see that the original MH test (Eqn.~\ref{eqn:p_a}) is equivalent to the following procedure: Draw $u \sim \text{Uniform}[0,1]$ and accept the proposal $\theta'$ if the average difference $\mu$ in the log-likelihoods of $\theta'$ and $\theta_{t}$ is greater than a threshold $\mu_0$, i.e.\ compute
\begin{align}
\mu_0 &= \frac{1}{N} \log \left[ u  \frac{ \rho(\theta_t) q(\theta'|\theta_t) } { \rho(\theta')  q(\theta_t|\theta')} \right], \text{~~and~~} \label{eqn:mu_0} \\
\mu &= \frac{1}{N} \sum_{i=1}^N l_i \text{~~where~~} l_i = \log p(x_i;\theta') -  \log p(x_i;\theta_t)
\end{align}
Then if $\mu > \mu_0$, accept the proposal and set $\theta_{t+1} \leftarrow \theta'$. If  $\mu \leq \mu_0$, reject the proposal and set $\theta_{t+1} \leftarrow \theta_{t}$. This reformulation of the MH test makes it very easy to frame it as a statistical hypothesis test. Given $\mu_0$ and a random sample $\left\{ l_{i_1}, \dots,  l_{i_n} \right\}$ drawn without replacement from the population $\left\{ l_1, \dots, l_N \right\}$, can we decide whether the population mean $\mu$ is greater than or less than the threshold $\mu_0$? The answer to this depends on the precision in the random sample. If the difference between the sample mean $\bar{l}$ and $\mu_0$ is significantly greater than the standard deviation $s$ of $\bar{l}$, we can make the decision to accept or reject the proposal confidently. If not, we should draw more data to increase the precision of $\bar{l}$ (reduce $s$)  until we have enough evidence to make a decision.

More formally, we test the hypotheses $H_1: \mu > \mu_0$ vs $H_2: \mu < \mu_0$. To do this, we proceed as follows: We compute the sample mean $\bar{l}$ and the sample standard deviation $s_l=\sqrt{(\overline{l^2}-(\bar{l})^2)\frac{n}{n-1}}$
. Then the standard deviation of $\bar{l}$ can be estimated as:
\begin{equation}
s= \frac{s_l}{\sqrt{n}} \sqrt{1- \frac{n-1}{N-1}} \label{eqn:estimated_std}
\end{equation}
where $\sqrt{1- \frac{n-1}{N-1}}$, the finite population correction term, is applied because we are drawing the subsample without replacement from a finite-sized population. Then, we compute the test statistic:
\begin{equation}
t = \frac{\bar{l} - \mu_0}{s} \label{eqn:t-statistics}
\end{equation}

\begin{algorithm}
\caption{Approximate MH test}\label{alg:approxmh}
\begin{algorithmic}[1]
  \REQUIRE $\theta_t$, $\theta'$, $\epsilon$, $\mu_0$, $X_N$, $m$
  \ENSURE $accept$
  \STATE Initialize estimated means $\bar{l} \leftarrow 0$ and $\overline{l^2} \leftarrow 0$
  \STATE Initialize $n \leftarrow 0$, $done \leftarrow$ \textbf{false}
  \STATE Draw $u \sim $ Uniform[0,1]
  \WHILE {\textbf{not} $done$}
    \STATE Draw mini-batch $\mathcal{X}$ of size min ($m$, $N-n$) without replacement from $X_N$ and set $X_N \leftarrow X_N \setminus \mathcal{X}$
    \STATE Update $\bar{l}$ and $\overline{l^2}$ using $\mathcal{X}$, and $n\leftarrow n+|\cX|$
		\STATE Estimate std $s$ using Eqn.~\ref{eqn:estimated_std}
		\STATE Compute $\delta \leftarrow 1 - \phi_{n-1}\left(\left|\dfrac{\bar{l} - \mu_0}{s}\right|\right)$
		\IF{$\delta < \epsilon$}
 		  \STATE  $accept \leftarrow \textbf{true}$ if $\bar{l}>\mu_0$ and $\textbf{false}$ otherwise
   		\STATE $done \leftarrow \textbf{true}$
    \ENDIF
  \ENDWHILE
\end{algorithmic}
\end{algorithm}

If $n$ is large enough for the central limit theorem (CLT) to hold, the test statistic $t$ follows a standard Student-t distribution with $n-1$ degrees of freedom, when $\mu=\mu_0$ (see Fig.~\ref{fig:tstat_normality} in supplementary for an empirical verification). Then, we compute $\delta = 1-\phi_{n-1}(|t|)$ where $\phi_{n-1}(.)$ is the cdf of the standard Student-t distribution with $n-1$ degrees of freedom. If $\delta < \epsilon$ (a fixed threshold) we can  confidently say that $\mu$ is significantly different from $\mu_0$. In this case, if $\bar{l} > \mu_0$, we decide $\mu > \mu_0$, otherwise we decide $\mu <\mu_0$. If $\delta \geq \epsilon$, we do not have enough evidence to make a decision. In this case, we draw more data to reduce the uncertainty, $s$, in the sample mean $\bar{l}$.  We keep drawing more data until we have the required confidence (i.e.\ until $\delta < \epsilon$). Note, that this procedure will terminate because when we have used all the available data, i.e.\ $n=N$, the standard deviation $s$ is 0, the sample mean $\bar{l} = \mu$ and $\delta = 0 <\epsilon$. So, we will make the same decision as the original MH test would make. Pseudo-code for our test is shown in Algorithm~\ref{alg:approxmh}. Here, we start with a mini-batch of size $m$ for the first test and increase it by $m$ datapoints when required.  

The advantage of our method is that often we can make confident decisions with $n<N$ datapoints and save on computation, although we introduce a small bias in the stationary distribution. But, we can use the computational time we save to draw more samples and reduce the variance. The bias-variance trade-off can be controlled by adjusting the knob $\epsilon$. When $\epsilon$ is high, we make decisions without sufficient evidence and introduce a high bias. As $\epsilon \to 0$, we make more accurate decisions but are forced to examine more data which results in high variance.

Our algorithm will behave erratically if the CLT does not hold, e.g.\ with very sparse datasets or datasets with extreme outliers. The CLT assumption can be easily tested empirically before running the algorithm to avoid such pathological situations. The sequential hypothesis testing method can also be used to speed-up Gibbs sampling in densely connected Markov Random Fields. We explore this idea briefly in Section~\ref{sec:gibbs} of the supplementary.

\section{Error Analysis and Test Design}\label{sec:analysis}

In~\ref{sec:erroranalysis}, we study the relation between the parameter $\epsilon$,  the error $\cE$ of the complete sequential test,  the error $\Delta$ in the acceptance probability and the error in the stationary distribution. In~\ref{sec:testdesign}, we describe how to design an optimal test that minimizes data usage given a bound on the error.

\subsection{Error Analysis and Estimation}\label{sec:erroranalysis}

The parameter $\epsilon$ is an upper-bound on the error of a single test and not the error of the complete sequential test. To compute this error, we assume a)  $n$ is large enough that the $t$ statistics can be approximated with $z$ statistics, and b) the joint distribution of the $\bar{l}$'s corresponding to different mini-batches used in the test is multivariate normal. Under these assumptions, we can show that the test statistic at different stages of the sequential test follows a Gaussian Random Walk process. This allows us to compute the error of the sequential test $\cE(\mu_{\mathrm{std}},m,\epsilon)$, and the expected proportion of the data required to reach a decision $\bar{\pi}(\mu_{\mathrm{std}},m,\epsilon)$, using an efficient dynamic programming algorithm. Note that $\cE$ and $\bar{\pi}$ depend on $\theta$, $\theta'$ and $u$ only through the `standardized mean' defined as $\mu_{\mathrm{std}}(u,\theta,\theta') \overset{\mathrm{def}}{=} \dfrac{\left( \mu(\theta,\theta') - \mu_0(\theta,\theta',u) \right)\sqrt{N-1}}{\sg_l(\theta,\theta')}$ where $\sigma_l$ is the true  standard deviation of the $l_i$'s.  See Section~\ref{sec:gaussian_process} of the supplementary for a detailed derivation and an empirical validation of the assumptions. 

\begin{figure}
\centering
  \includegraphics[width=.8\linewidth]{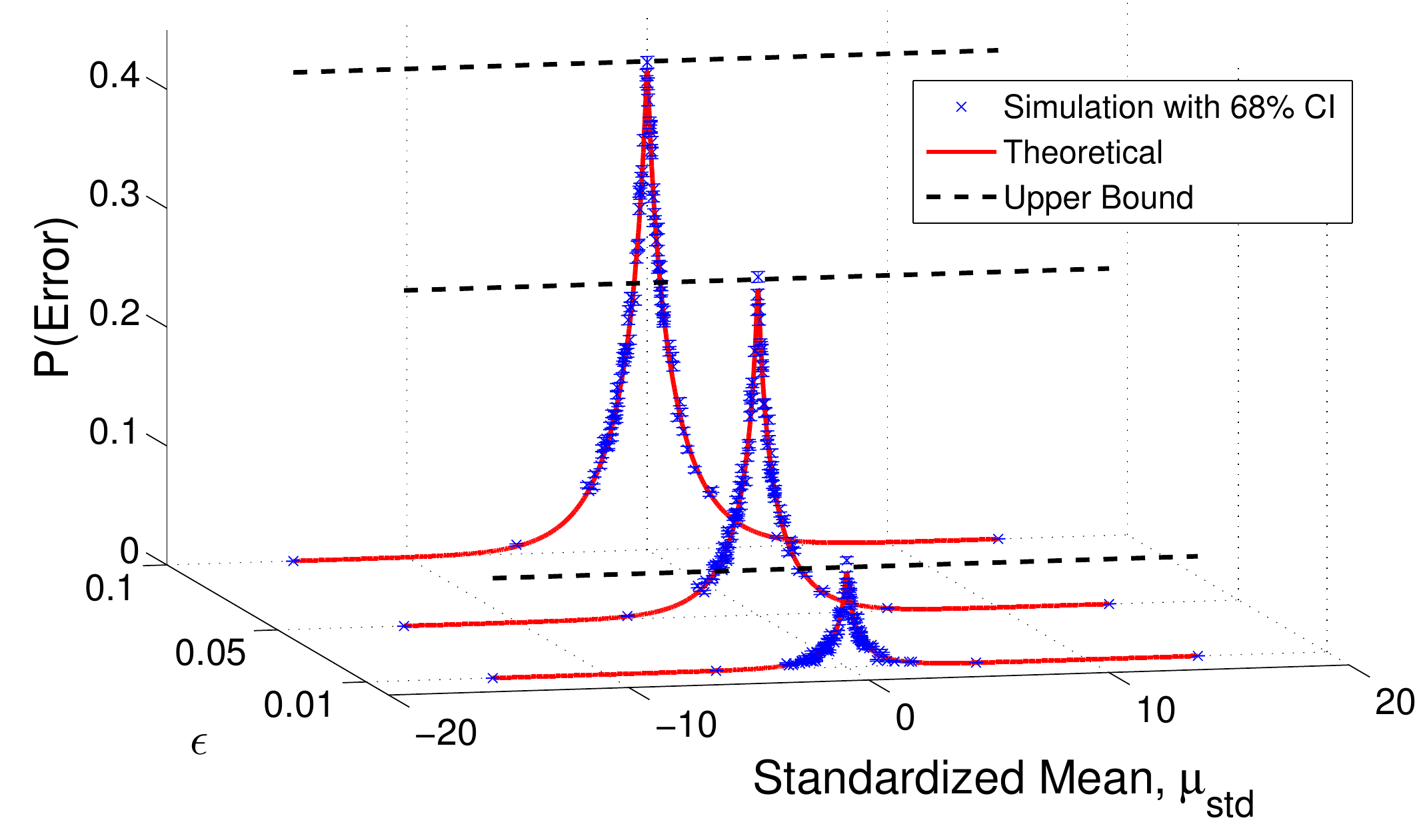}
  \caption{Error $\cE$ estimated using simulation (blue cross with 1 $\sigma$ error bar) and dynamic programming (red line). An upper bound (black dashed line) is also shown.}
  \label{fig:dynprog_vs_sim_error}
\end{figure}

Fig.~\ref{fig:dynprog_vs_sim_error} shows the theoretical and actual error of $1000$ sequential tests for the logistic regression model described in Section~\ref{sec:exp_mnist}. The error $\cE(\mu_{\mathrm{std}},m,\epsilon)$ is highest in the worst case when $\mu = \mu_0$. Therefore, $\cE(0,m,\epsilon)$ is an upper-bound on $\cE$. Since the error decreases sharply as $\mu$ moves away from $\mu_0$, we can get a more useful estimate of $\cE$ if we have some knowledge about the distribution of $\mu_{\mathrm{std}}$'s that will be encountered during the Markov chain simulation. 
 

Now, let $P_{a,\epsilon}(\theta, \theta')$ be the actual acceptance probability of our algorithm and let  $\Delta(\theta,\theta') \defeq P_{a,\epsilon}(\theta, \theta') - P_a(\theta, \theta')$ be the error in $P_{a,\epsilon}$. In Section~\ref{sec:MH_step_error} of the supplementary, we show that for any $(\theta,\theta')$:
\begin{equation}
\Delta = \int_{P_a}^1 \cE (\mu_{\mathrm{std}}(u)) \td u - \int_{0}^{P_a} \cE (\mu_{\mathrm{std}}(u)) \td u
\end{equation}
Thus, the errors corresponding to different $u$'s partly cancel each other. As a result, although $| \Delta(\theta,\theta') |$ is upper-bounded by the worst-case error $\cE(0,m,\epsilon)$ of the sequential test, the actual error is usually much smaller. For any given $(\theta,\theta')$, $\Delta$ can be computed easily using 1-dimensional quadrature.

Finally, we show that the error in the stationary distribution is bounded linearly by $\Delta_{\text{max}}=\sup_{\theta,\theta'}|\Delta(\theta,\theta')|$. As noted above, $\Delta_{\text{max}} \leq \cE(0,m,\epsilon)$ but is usually much smaller. Let $d_v(P,Q)$ denote the total variation distance\footnote{The total  variation distance between two distributions $P$ and $Q$, that are absolutely continuous w.r.t.\ measure $\Omega$, is defined as $d_v(P,Q)\overset{\mathrm{def}}{=}\ha\int_{\theta\in\Theta}|f_P(\theta)-f_Q(\theta)|d\Omega(\theta)$ where $f_P$ and $f_Q$ are their respective densities (or  Radon-Nikodym derivatives to be more precise).} between two distributions, $P$ and $Q$. If the transition kernel $\mathcal{T}_0$ of the exact Markov chain satisfies the contraction condition $d_v(P\mathcal{T}_0, \mathcal{S}_0) \leq \eta d_v(P, \mathcal{S}_0)$ for all probability distributions $P$ with a constant $\eta\in[0,1)$, we can prove (see supplementary Section~\ref{sec:proof}) the following upper bound on the error in the stationary distribution:
\begin{theorem}\label{thm:bound}
The distance between the posterior distribution $\mathcal{S}_0$ and the stationary distribution of our approximate Markov chain $\mathcal{S}_{\epsilon}$ is upper bounded as:
$$
d_v(\mathcal{S}_0, \mathcal{S}_\epsilon) \leq \frac{\Delta_{\text{max}}}{1-\eta}
$$
\end{theorem}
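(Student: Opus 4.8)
The plan is to bound the gap between the two stationary distributions by combining a \emph{one-step perturbation bound} for the transition kernels with the assumed contraction of the exact kernel, and then solving a scalar inequality for $d_v(\mathcal{S}_0,\mathcal{S}_\epsilon)$. Write $\mathcal{T}_0$ for the exact Metropolis--Hastings kernel (acceptance probability $P_a$) and $\mathcal{T}_\epsilon$ for the kernel induced by Algorithm~\ref{alg:approxmh} (acceptance probability $P_{a,\epsilon}=P_a+\Delta$); the two kernels use the same proposal $q$ and differ only through the accept/reject probability. Let $\mathcal{S}_\epsilon$ be the stationary distribution of $\mathcal{T}_\epsilon$ referred to in the statement, so $\mathcal{S}_\epsilon\mathcal{T}_\epsilon=\mathcal{S}_\epsilon$, and recall $\mathcal{S}_0\mathcal{T}_0=\mathcal{S}_0$.

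First I would establish the perturbation bound: for every probability distribution $P$ on $\Theta$, $d_v(P\mathcal{T}_\epsilon,\,P\mathcal{T}_0)\le\Delta_{\text{max}}$. For a fixed current state $\theta$, the signed measure $\mathcal{T}_\epsilon(\theta,\cdot)-\mathcal{T}_0(\theta,\cdot)$ splits into an absolutely continuous part with density $q(\theta'|\theta)\,\Delta(\theta,\theta')$ in $\theta'$, plus a ``stay-put'' contribution $-\bigl(\int q(\theta''|\theta)\Delta(\theta,\theta'')\,d\Omega(\theta'')\bigr)$ attached to the point $\theta$. Taking absolute values, the continuous part contributes at most $\int q(\theta'|\theta)|\Delta(\theta,\theta')|\,d\Omega(\theta')\le\Delta_{\text{max}}$ and the stay-put part contributes at most the same quantity; halving the total mass of the signed measure gives $d_v(\mathcal{T}_\epsilon(\theta,\cdot),\mathcal{T}_0(\theta,\cdot))\le\Delta_{\text{max}}$. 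Averaging over $\theta\sim P$ and using $\lVert\int P(d\theta)(\mu_\theta-\nu_\theta)\rVert_{\mathrm{TV}}\le\int P(d\theta)\lVert\mu_\theta-\nu_\theta\rVert_{\mathrm{TV}}$ yields the bound. Note that only $\Delta_{\text{max}}$ enters; the finer integral representation of $\Delta$ from Section~\ref{sec:MH_step_error} is used elsewhere only to argue that $\Delta_{\text{max}}$ is in practice well below the worst case $\cE(0,m,\epsilon)$.

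Then I would assemble the estimate with a single triangle-inequality step applied to $\mathcal{S}_\epsilon=\mathcal{S}_\epsilon\mathcal{T}_\epsilon$:
$$
d_v(\mathcal{S}_\epsilon,\mathcal{S}_0)=d_v(\mathcal{S}_\epsilon\mathcal{T}_\epsilon,\mathcal{S}_0)\le d_v(\mathcal{S}_\epsilon\mathcal{T}_\epsilon,\mathcal{S}_\epsilon\mathcal{T}_0)+d_v(\mathcal{S}_\epsilon\mathcal{T}_0,\mathcal{S}_0)\le\Delta_{\text{max}}+\eta\,d_v(\mathcal{S}_\epsilon,\mathcal{S}_0),
$$
where the first term uses the perturbation bound with $P=\mathcal{S}_\epsilon$ and the second uses the contraction hypothesis $d_v(P\mathcal{T}_0,\mathcal{S}_0)\le\eta\,d_v(P,\mathcal{S}_0)$ with $P=\mathcal{S}_\epsilon$. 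Since $\eta<1$, rearranging gives $d_v(\mathcal{S}_0,\mathcal{S}_\epsilon)\le\Delta_{\text{max}}/(1-\eta)$. An equivalent, slightly more robust route is to iterate the one-step bound: $d_v(P\mathcal{T}_\epsilon^{k},\mathcal{S}_0)\le\eta^{k}d_v(P,\mathcal{S}_0)+\tfrac{1-\eta^{k}}{1-\eta}\Delta_{\text{max}}$, which lets $k\to\infty$ and also makes transparent that the chain run with $\mathcal{T}_\epsilon$ eventually stays within total-variation distance $\Delta_{\text{max}}/(1-\eta)$ of $\mathcal{S}_0$.

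The main obstacle is the perturbation lemma, and specifically the bookkeeping around the rejection self-transition: the transition kernels are not absolutely continuous with respect to $\Omega$ because of the atom created when a proposal is rejected, so one must either track the atomic and continuous parts of $\mathcal{T}_\epsilon(\theta,\cdot)-\mathcal{T}_0(\theta,\cdot)$ separately (as above) or note that after one application to a distribution with a density the resulting mixture again has a density and argue with densities throughout. A secondary point is that the argument bounds the distance of a stationary $\mathcal{S}_\epsilon$ to $\mathcal{S}_0$ assuming such an $\mathcal{S}_\epsilon$ exists (as the theorem presupposes); if one additionally wanted existence and uniqueness it would require a further ergodicity property of $\mathcal{T}_\epsilon$ beyond the contraction assumed for $\mathcal{T}_0$.
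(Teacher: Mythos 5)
Your proof is correct, and while the first half (the one-step perturbation bound $d_v(P\mathcal{T}_\epsilon,P\mathcal{T}_0)\le\Delta_{\text{max}}$, obtained by splitting the signed measure $\mathcal{T}_\epsilon(\theta,\cdot)-\mathcal{T}_0(\theta,\cdot)$ into its $q$-absolutely-continuous part and the rejection atom and halving the total mass) is essentially identical to the paper's, the second half takes a genuinely different and cleaner route. The paper proves an auxiliary lemma dynamically: it iterates $d_v(P^{(t+1)},\mathcal{S}_0)\le\Delta_{\text{max}}+\eta\,d_v(P^{(t)},\mathcal{S}_0)$, shows that for any $r<1-\eta$ the chain driven by $\mathcal{T}_\epsilon$ enters the ball $\{P: d_v(P,\mathcal{S}_0)<\Delta_{\text{max}}/r\}$ in finite time and stays there by induction, and then invokes convergence of $P^{(t)}$ to $\mathcal{S}_\epsilon$ before letting $r\to 1-\eta$. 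You instead apply the triangle inequality once at the fixed point $\mathcal{S}_\epsilon=\mathcal{S}_\epsilon\mathcal{T}_\epsilon$ and solve the scalar inequality $d_v(\mathcal{S}_\epsilon,\mathcal{S}_0)\le\Delta_{\text{max}}+\eta\,d_v(\mathcal{S}_\epsilon,\mathcal{S}_0)$, which is legitimate since total variation distance is bounded. Your version buys two things: it avoids the limiting argument in $r$ entirely, and it needs only \emph{stationarity} of $\mathcal{S}_\epsilon$ rather than the (tacit) assumption in the paper's proof that the approximate chain actually converges to $\mathcal{S}_\epsilon$ in total variation; your iterated bound $d_v(P\mathcal{T}_\epsilon^{k},\mathcal{S}_0)\le\eta^{k}d_v(P,\mathcal{S}_0)+\tfrac{1-\eta^{k}}{1-\eta}\Delta_{\text{max}}$ recovers in closed form what the paper's ball argument establishes by induction. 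The caveat you flag about existence and uniqueness of $\mathcal{S}_\epsilon$ applies equally to both proofs, since the theorem presupposes a stationary distribution for the approximate chain.
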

\subsection{Optimal Sequential Test Design}\label{sec:testdesign}

We now briefly describe how to choose the parameters of the algorithm: $\epsilon$, the error of a single test and $m$, the mini-batch size. A very simple strategy we recommend is to choose $m \approx 500$ so that the Central Limit Theorem holds and keep $\epsilon$ as small as possible while maintaining a low average data usage. This rule works well in practice and is used in Experiments~\ref{sec:exp_mnist} - \ref{sec:exp_sgld}. 

The more discerning practitioner can design an optimal test that minimizes the data used while keeping the error below a given tolerance. Ideally, we want to do this based on a tolerance on the error in the stationary distribution $\cS_\epsilon$. Unfortunately, this error depends on the contraction parameter, $\eta$, of the exact transition kernel, which is difficult to compute. A more practical choice is a bound on the error $\Delta$ in the acceptance probability, since the error in $\cS_\epsilon$ increases linearly with $\Delta$. Since $\Delta$ is a function of $(\theta, \theta')$, we can try to control the average value of $\Delta$ over the empirical distribution of $(\theta,\theta')$ that would be encountered while simulating the Markov chain. Given a tolerance $\Delta^*$ on this average error, we can find the optimal $m$ and $\epsilon$ by solving the following optimization problem (e.g.\ using grid search) to minimize the average data usage :
\begin{align}
&\min_{m, \epsilon} \eE_{\theta, \theta'} \left[\eE_{u}\bar{\pi}(\mu_{\text{std}}(u,\theta,\theta'), m, \epsilon)\right] \nn\\
& \text{s.t. } \eE_{\theta, \theta'}  |\Delta(m, \epsilon,\theta,\theta')| \leq \Delta^* \label{eqn:avg_design}
\end{align}
In the above equation, we estimate the average data usage, $\eE_u[\bar{\pi}]$, and the error in the acceptance probability, $\Delta$, using dynamic programming with one dimensional numerical quadrature on $u$. The empirical distribution for computing the expectation with respect to $(\theta,\theta')$ can be obtained using a trial run of the Markov chain. 
Without a trial run the best we can do is to control the worst case error  $\cE(0, m, \epsilon)$ (which is also an upper-bound on $\Delta$) in each sequential test by solving the following minimization problem:
\begin{align}
&\min_{m, \epsilon} \bar{\pi}(0, m, \epsilon)  ~~\text{s.t.}~~ \cE(0, m, \epsilon) \leq \Delta^* \label{eqn:worstcase_design}
\end{align}
But this leads to a very conservative design as the worst case error is usually much higher than the average case error. We illustrate the sequential design in Experiment~\ref{sec:optdesign_exp}. More details and a generalization of this method is given in supplementary Section~\ref{sec:optimal_design}.

\section{Experiments}\label{sec:experiments}

\subsection{Random Walk - Logistic Regression} \label{sec:exp_mnist}

We first test our method using a random walk proposal $q(\theta'|\theta_t) = \mathcal{N}(\theta_t,\sigma_{RW}^2)$. Although the random walk proposal is not efficient, it is very useful for illustrating our algorithm because the proposal does not contain any information about the target distribution, unlike Langevin or Hamiltonian methods. So, the responsibility of converging to the correct distribution lies solely with the MH test. Also since $q$ is symmetric, it does not appear in the MH test and we can use $\mu_0 = \frac{1}{N} \log \left[ u   \rho(\theta_t)  / \rho(\theta') \right]$.

The target distribution in this experiment was the posterior for a logistic regression model trained on the MNIST dataset for classifying digits 7 vs 9. The dataset consisted of 12214 datapoints and we reduced the dimensionality from 784 to 50 using PCA. We chose a zero mean spherical Gaussian prior with precision =  10, and set $\sigma_{RW} = 0.01$.

\begin{figure}
\centering
\includegraphics[scale=0.4]{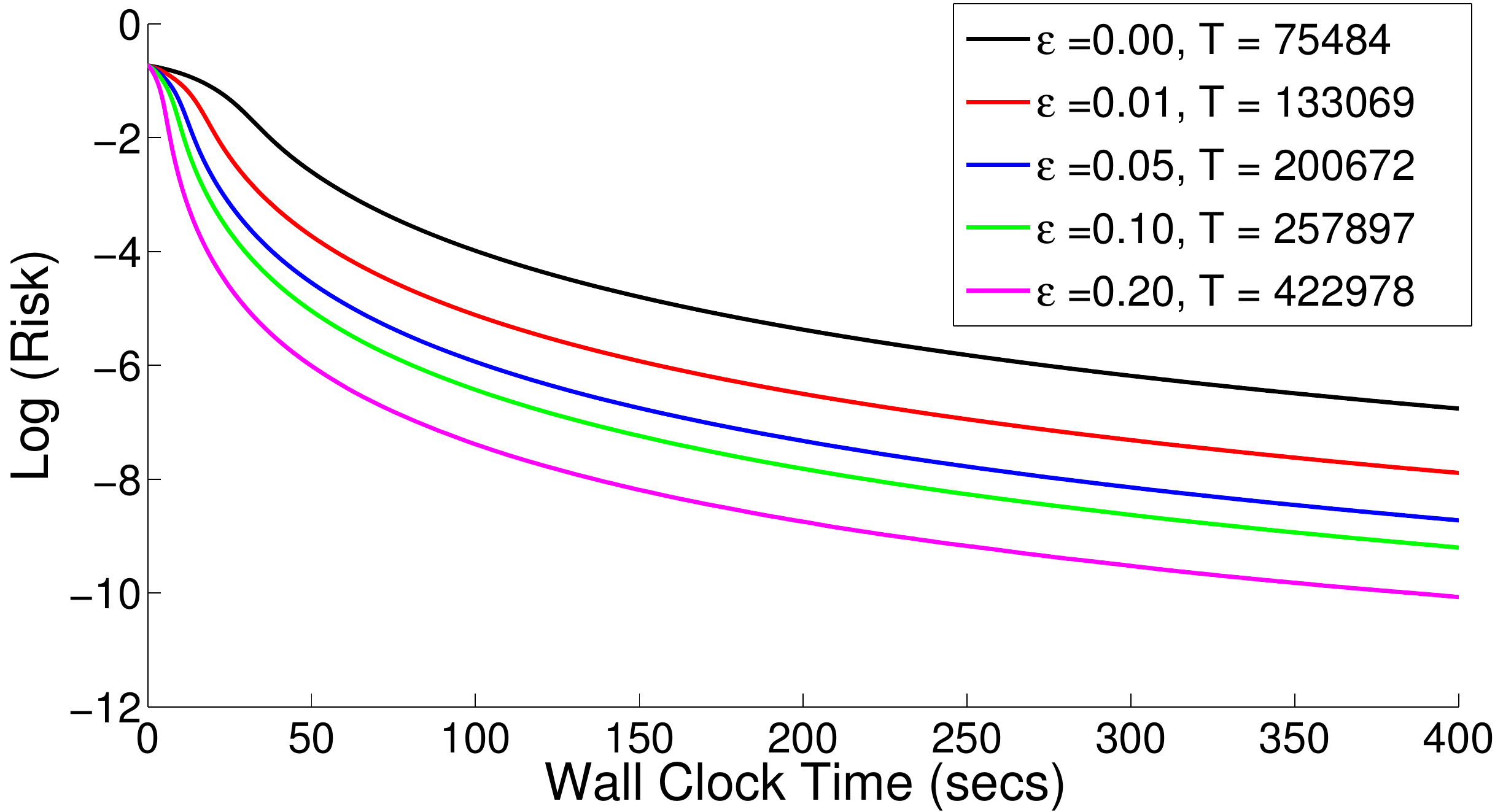}  
\caption{Logistic Regression: Risk in predictive mean.}
\label{fig:MNIST_LogRiskvsT}
\end{figure}

%

In Fig.~\ref{fig:MNIST_LogRiskvsT}, we show how the logarithm of the risk in estimating the predictive mean, decreases as a function of wall clock time.  The predictive mean of a test point $x^*$ is defined as $\mathbb{E}_{p(\theta|X_N)} [ p(x^*|\theta) ]$. To calculate the risk, we first estimate the true predictive mean using a long run of Hybrid Monte Carlo. Then, we compute multiple estimates of the predictive mean from our approximate algorithm and obtain the risk as the mean squared error in these estimates. We plot the average risk of 2037 datapoints in the test set.  Since the risk $R =B^2 + V =  B^2 + \frac{\sigma^2 f}{T}$, we  expect it to decrease as a function of time until the bias dominates the variance. The figure shows that even after collecting a lot of samples, the risk is still dominated by the variance and the minimum risk is obtained with $\epsilon>0$. 

\subsection{Independent Component Analysis}\label{sec:exp_ica}

Next, we use our algorithm to sample from the posterior distribution of the unmixing matrix in Independent Component Analysis (ICA) \citep{hyvarinen2000independent}. When using prewhitened data, the unmixing matrix $W \in \mathbb{R}^{D \times D}$ is constrained to lie on the Stiefel manifold of orthonormal matrices.  We choose a  prior that is uniform over the manifold and zero elsewhere. We model the data as $p(x|W) = | \text{det}(W)| \prod_{j=1}^D \left[4 \cosh^2 (\frac{1}{2} w_j^Tx)\right]^{-1}$ where $w_j$ are the rows of $W$. Since the prior is zero outside the manifold, the same is true for the posterior. Therefore we use a random walk on the Stiefel manifold as a proposal distribution \citep{ouyang2008bayesian}. Since this is a symmetric proposal distribution, it does not appear in the MH test and we can use $\mu_0 = \frac{1}{N} \log \left[ u  \right]$.

To perform a large scale experiment, we created a synthetic dataset by mixing 1.95 million samples of 4 sources: (a) a Classical music recording (b) street / traffic noise (c) \& (d) 2 independent Gaussian sources. To measure the correctness of the sampler, we measure the risk in estimating $I = \mathbb{E}_{p(W|X)} \left[ d_A(W,W_0) \right]$ where the test function $d_A$ is the Amari distance \citep{amari1996new} and $W_0$ is the true unmixing matrix. We computed the ground truth using a long run (T = 100K samples) of the exact MH algorithm. Then we ran each algorithm 10 times, each time for $\approx$ 6400 secs. We calculated the risk by averaging the squared error in the estimate from each Markov chain, over the 10 chains. This is shown in Fig.~\ref{fig:ICA_risk}.  Note that even after 6400 secs the variance dominates the bias, as evidenced by the still decreasing risk, except for the most biased algorithm with $\epsilon = 0.2$. Also, the lowest risk at 6400 secs is obtained with $\epsilon = 0.1$ and not the exact MH algorithm ($\epsilon = 0$). But we expect the exact algorithm to outperform all the approximate algorithms if we were to run for an infinite time.

\begin{figure}
\centering
\includegraphics[scale=0.4]{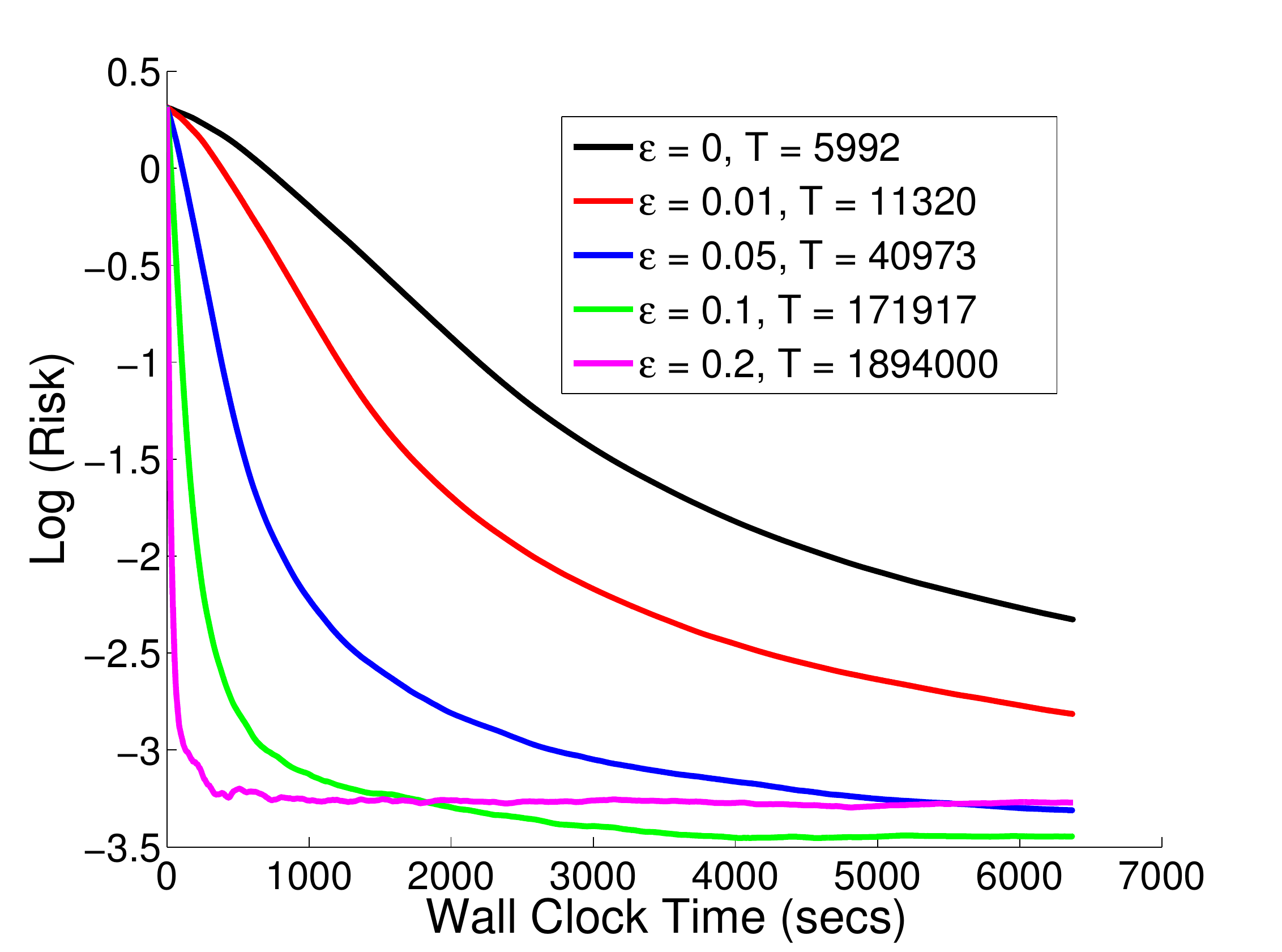}
\caption{ICA: Risk in mean of Amari distance}
\label{fig:ICA_risk}
\end{figure}

\subsection{Variable selection in Logistic Regression}\label{sec:exp_rjmcmc}

Now, we apply our MH test to variable selection in a logistic regression model using the reversible jump MCMC algorithm of \citet{green1995reversible}. We use a model that is similar to the Bayesian LASSO model for linear regression described in \citet{chen2011bayesian}. Specifically, given $D$ input features, our parameter $\theta = \left\{ \beta, \gamma \right\}$ where $\beta$ is a vector of $D$ regression coefficients and $\gamma$ is a $D$ dimensional binary vector that indicates whether a particular feature is included in the model or not. The prior we choose for $\beta$ is $p(\beta_j|\gamma, \nu) = \frac{1}{2\nu} \exp \left\{ - \frac{|\beta_j|}{\nu}\right\}$ if $\gamma_j = 1$. If $\gamma_j =0$,  $\beta_j$ does not appear in the model. Here $\nu$ is a shrinkage parameter that pushes $\beta_j$ towards 0, and we choose a prior $p(\nu) \propto 1/\nu$. We also place a right truncated Poisson prior $p(\gamma|\lambda) \propto \dfrac{\lambda^k}{ {D \choose k} k!}$ on $\gamma$ to control the size of the model, $k = \sum_{j=1}^D \gamma_j$ We set $\lambda = 10^{-10}$ in this experiment.

Denoting the likelihood of the data by $l_N(\beta,\gamma)$, the posterior distribution after integrating out $\nu$ is $p(\beta,\gamma|X_N,y_N,\lambda) \propto l_N(\beta,\gamma) \|\beta \|_{1}^{-k} \lambda^k B(k,D-k+1)$ where $B(.,.)$ is the beta function.  Instead of integrating out $\lambda$, we use it as a parameter to control the size of the model. We use the same proposal distribution as in \cite{chen2011bayesian} which is a mixture of 3 type of moves that are picked randomly in each iteration: an update move, a birth move and a death move. A detailed description is given in Supplementary Section~\ref{sec:rjmcmc_sup}.

We applied this to the MiniBooNE dataset from the UCI machine learning repository\citep{Bache+Lichman:2013}. Here the task is to classify electron neutrinos (signal) from muon neutrinos (background). There are 130,065 datapoints (28\% in +ve class) with 50 features to which we add a constant feature of 1's. We randomly split the data into a training (80\%) and testing (20\%) set. To compute ground truth, we collected T=400K samples using the exact reversible jump algorithm ($\epsilon = 0$). Then, we ran the approximate MH algorithm with different values of $\epsilon$ for around 3500 seconds.  We plot the risk in predictive mean of test data (estimated from 10 Markov chains) in  Fig.~\ref{fig:rjmcmc_risk}. Again we see that the lowest risk is obtained with $\epsilon > 0$.

 \begin{figure}
\centering
\includegraphics[scale=0.4]{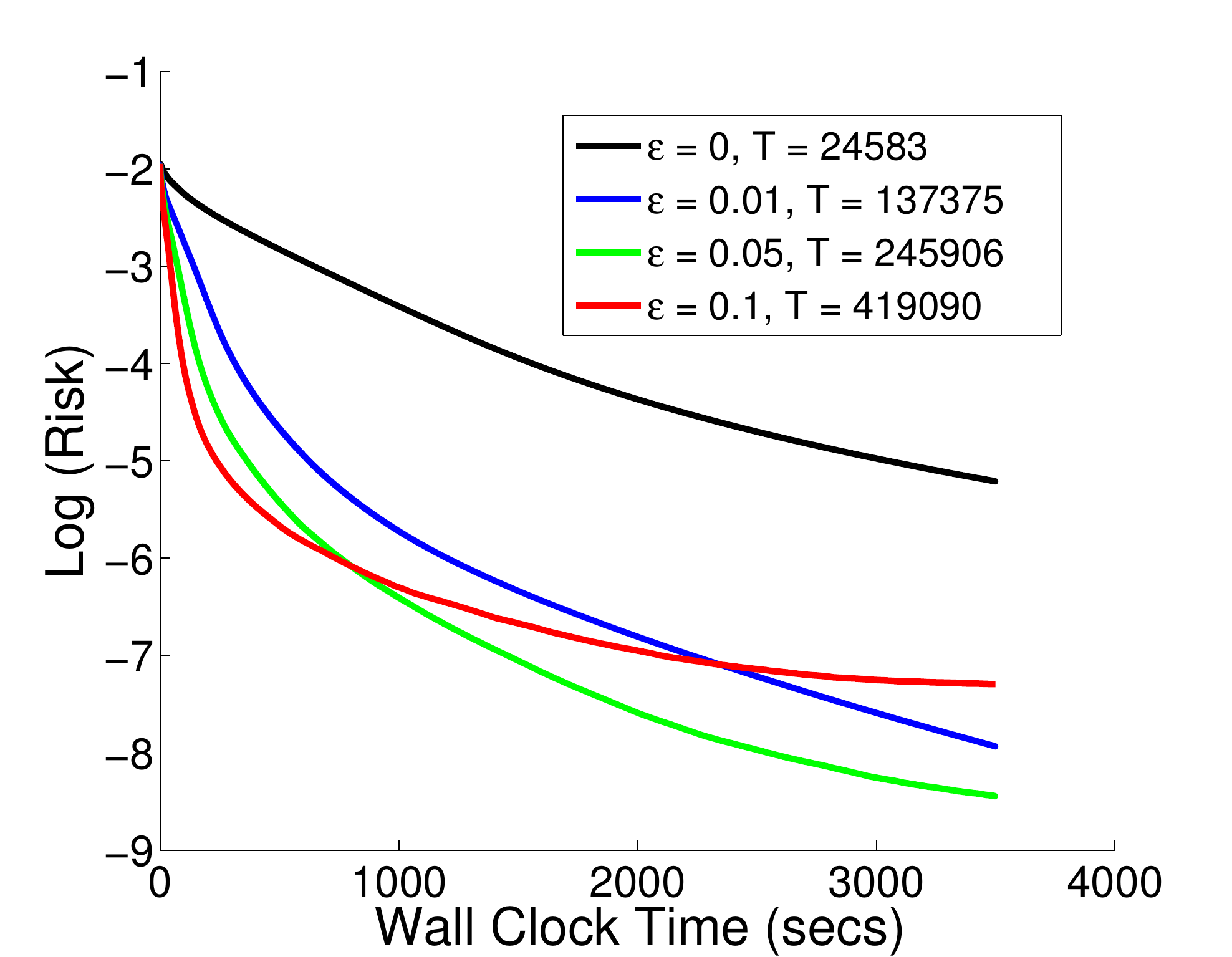}
\caption{RJMCMC: Risk in predictive mean}
\label{fig:rjmcmc_risk}
\end{figure}

The acceptance rates for the birth/death moves starts off at  $\approx$ 20\% but dies down to  $\approx$ 2\% once a good model is found. The acceptance rate for update moves is kept at $\approx$ 50\%. The model also suffers from local minima. For the plot in Fig.~\ref{fig:rjmcmc_risk}, we started with only one variable and we ended up learning models with around 12 features, giving a classification error $\approx$ 15\%. But, if we initialize the sampler with all features included and initialize $\beta$ to the MAP value, we learn models with around 45 features, but with a lower classification error $\approx$ 10\%. Both the exact reversible jump algorithm and our approximate version suffer from this problem.    We should bear this in mind when interpreting ``ground truth". However, we have observed that when initialized with the same values, we obtain similar results with the approximate algorithm and the exact algorithm (see e.g.\ Fig.~\ref{fig:rjmcmc_incl} in supplementary).

\subsection{Stochastic Gradient Langevin Dynamics}\label{sec:exp_sgld}

Finally, we apply our method to Stochastic Gradient Langevin Dynamics\citep{WellingTeh11}. In each iteration, we randomly draw a mini-batch $\mathcal{X}_n$ of size $n$, and propose:
\begin{equation}
\theta' \sim q(.|\theta,\mathcal{X}_n) = \mathcal{N} \left( \theta + \dfrac{\alpha}{2} \nabla_\theta \left\{ \dfrac{N}{n} \sum_{x \in \mathcal{X}_n} \log p (x|\theta) + \log \rho (\theta) \right\} , \alpha \right)
\end{equation}
The proposed state $\theta'$ is always accepted (without conducting any MH test). Since the acceptance probability approaches $1$ as we reduce $\alpha$, the bias from not conducting the MH test can be kept under control by using  $\alpha \approx 0$. However, we have to use a reasonably large $\alpha$ to keep the mixing rate high. This can be problematic for some distributions, because SGLD relies solely on gradients of the log density and it can be easily thrown off track by large gradients in low density regions, unless $\alpha \approx 0$.

As an example, consider an L1-regularized linear regression model. Given a dataset $\left\{x_i,y_i\right\}_{i=1}^N$ where $x_i$ are predictors and $y_i$ are targets, we use a Gaussian error model $p(y|x,\theta) \propto \exp \left\{ -\frac{\lambda}{2} (y - \theta^Tx)^2 \right\}$ and choose a Laplacian prior for the parameters $p(\theta) \propto \exp( - \lambda_0 \| \theta \|_1)$. For pedagogical reasons, we will restrict ourselves to a toy version of the problem where $\theta$ and $x$ are one dimensional. We  use a synthetic dataset with $N= 10000$ datapoints generated as $y_i = 0.5x_i + \xi$ where $\xi \sim \mathcal{N}(0,1/3)$. We choose $\lambda = 3$ and $\lambda_0 = 4950$, so that the prior is not washed out by the likelihood. The posterior density and the gradient of the log posterior are shown in figures~\ref{fig:sgld_density} and~\ref{fig:sgld_gradient} respectively.

\begin{figure}[ht]
  \centering
  \subfigure[Posterior density]{
  		\includegraphics[width=.35\textwidth]{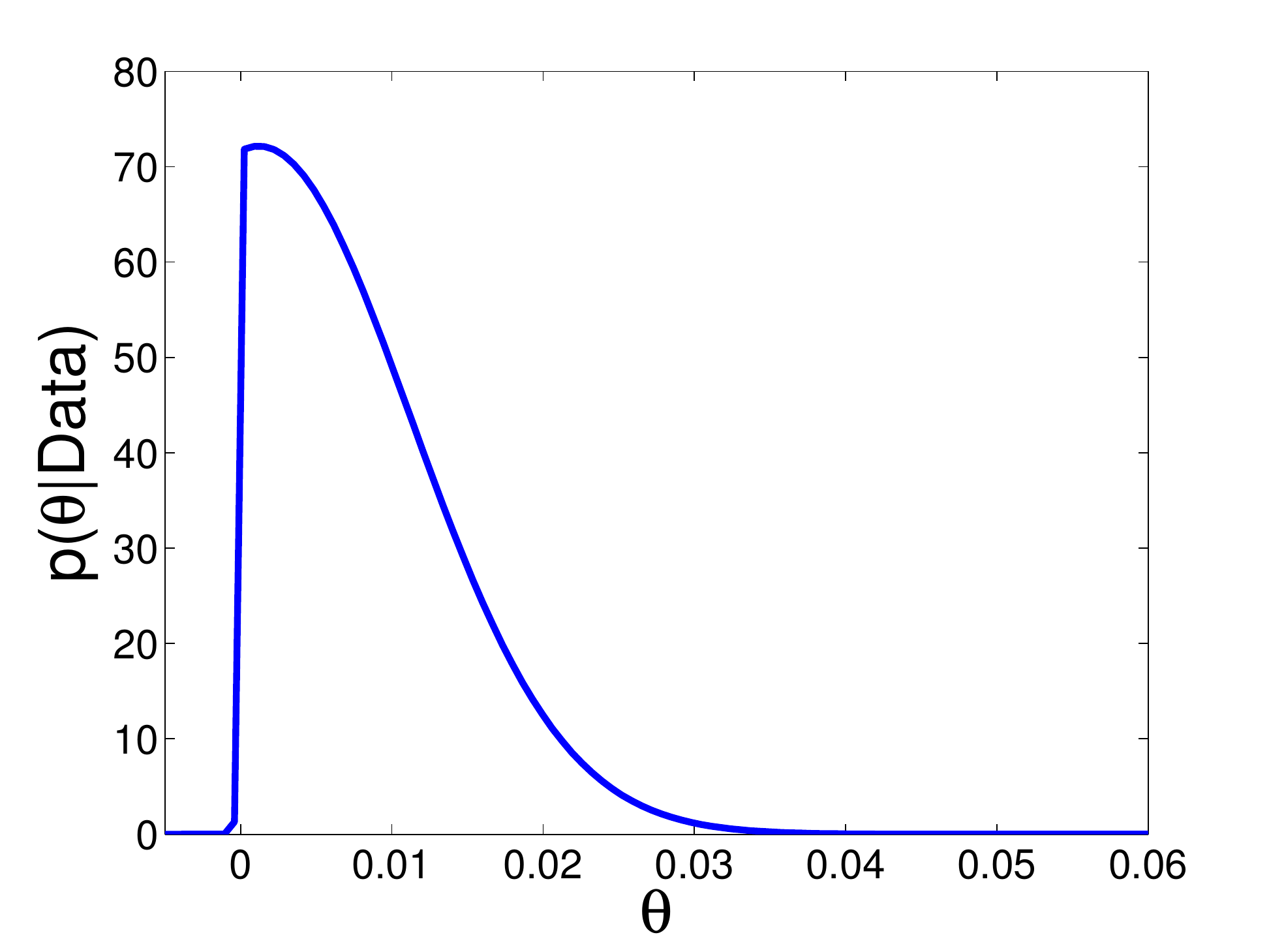}
    \label{fig:sgld_density}
  }~
  \subfigure[Gradient of log posterior]{
  		\includegraphics[width=.35\textwidth]{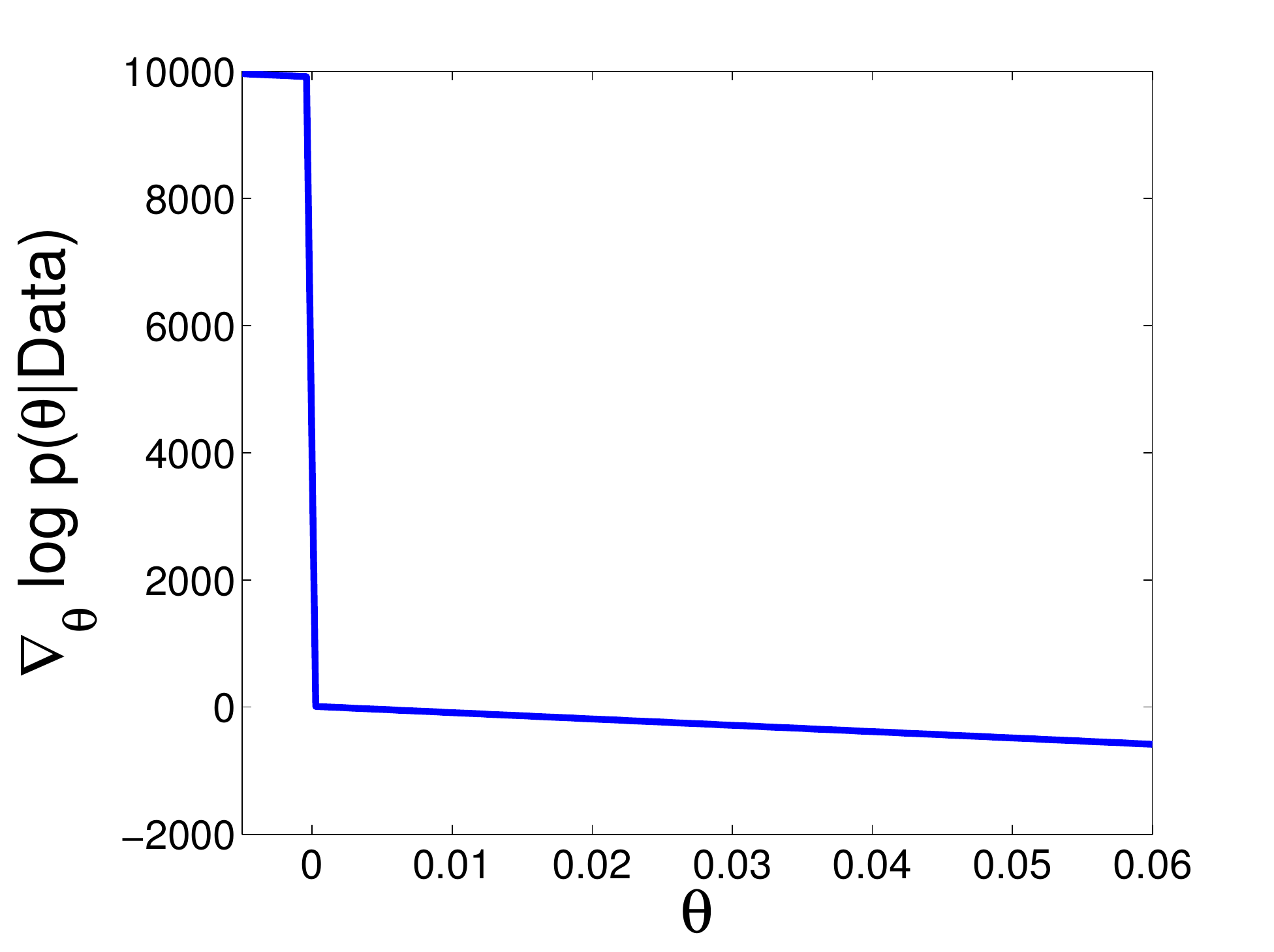}
    \label{fig:sgld_gradient}
  }\\
  \subfigure[SGLD]{
  		\includegraphics[width=.35\textwidth]{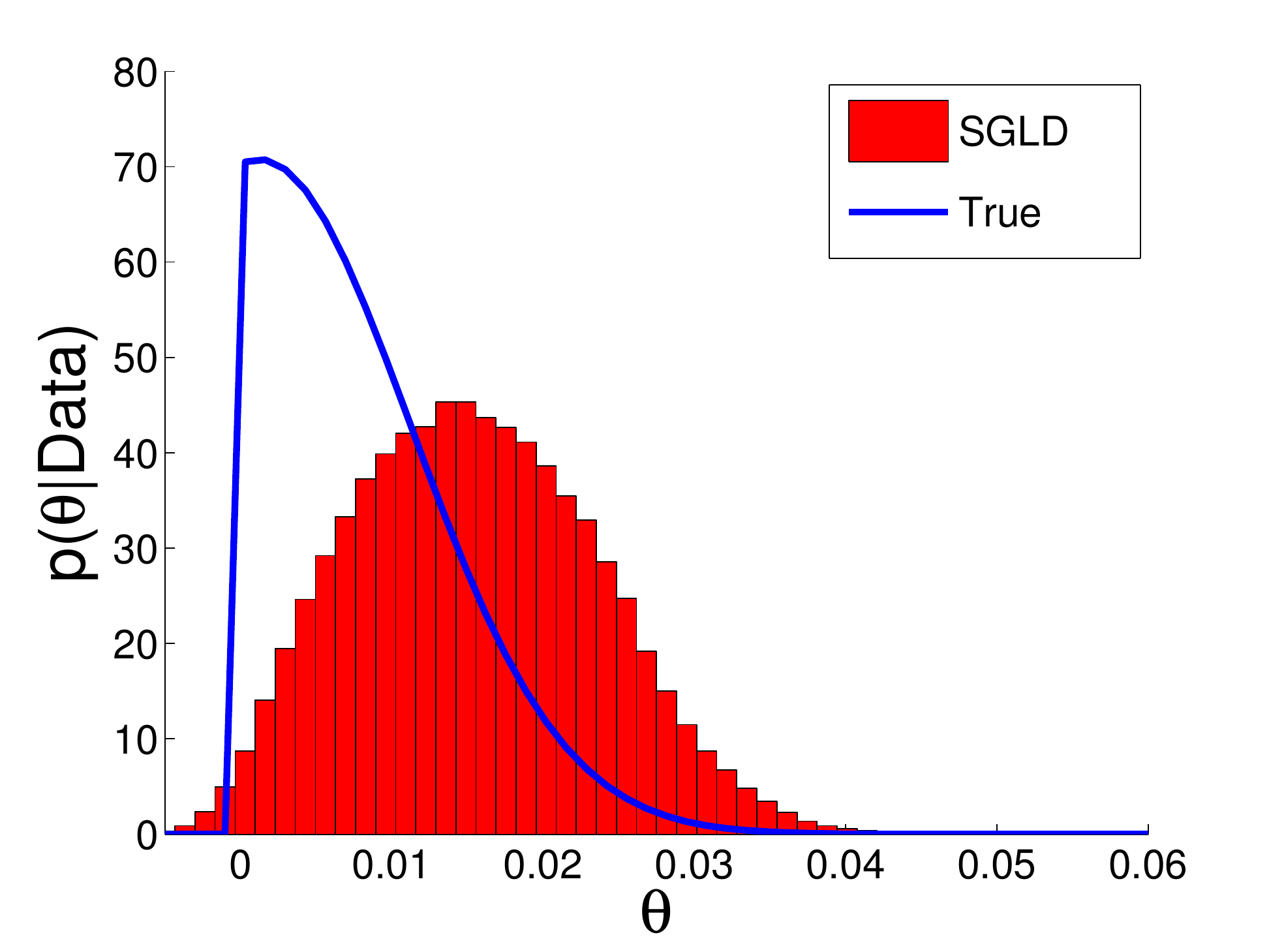}
  	  \label{fig:sgld_empsgld}
  	}~
  \subfigure[SGLD + MH, $\epsilon = 0.5$. ]{
    \includegraphics[width=.35\textwidth]{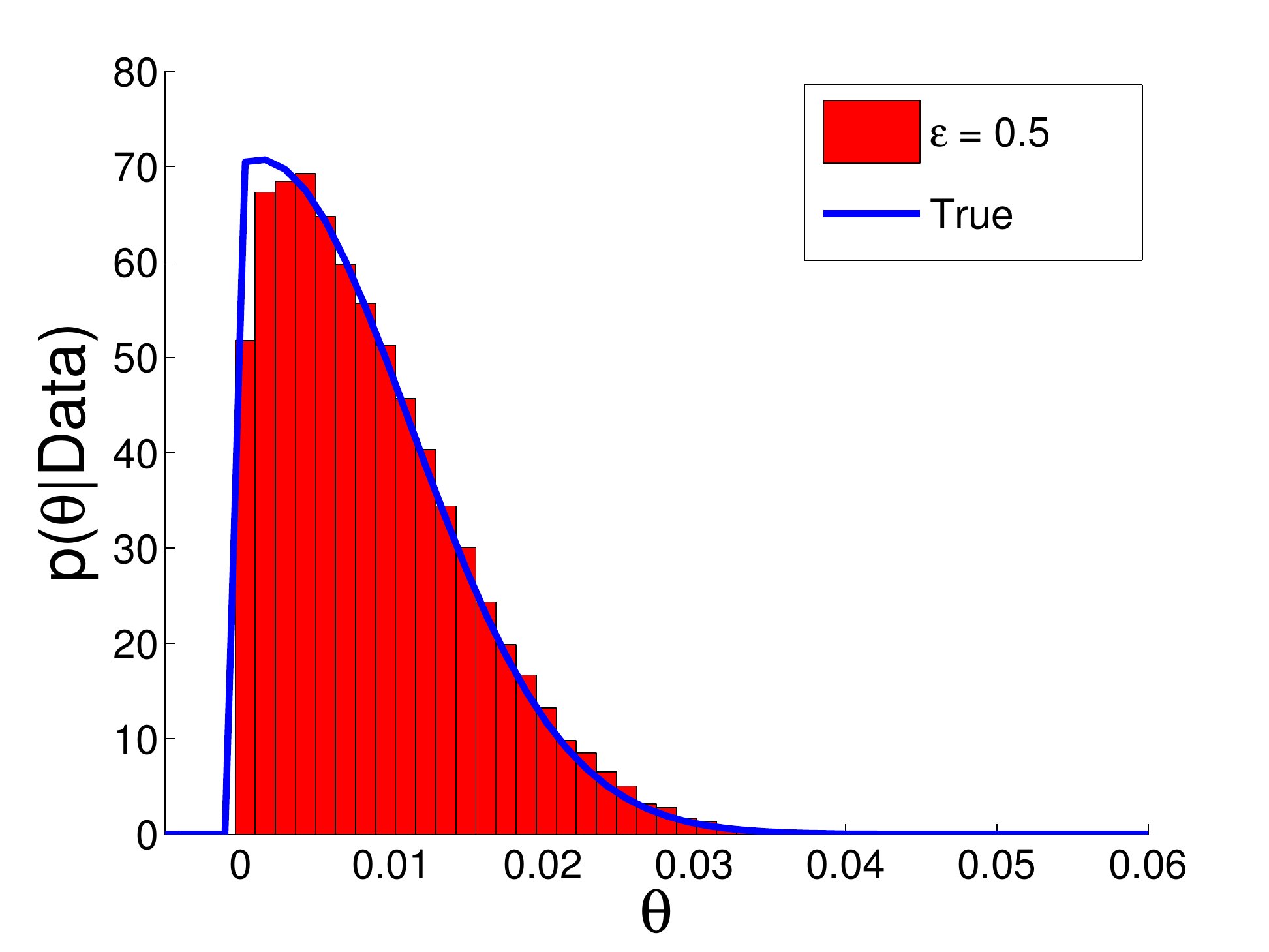}
    \label{fig:sgld_emp_eps0p5}
  }
  \caption{Pitfalls of using uncorrected SGLD}
  \label{fig:SGLD}
\end{figure}

An empirical histogram of samples obtained by running SGLD with $\alpha = 5\times10^{-6}$ is shown in Fig.~\ref{fig:sgld_empsgld}. The effect of omitting the MH test is quite severe here. When the sampler reaches the mode of the distribution, the Langevin noise occasionally throws it into the valley to the left, where the gradient is very high. This propels the sampler far off to the right, after which it takes a long time to find its way back to the mode. However, if we had used an MH accept-reject test, most of these troublesome jumps into the valley would be rejected because the density in the valley is much lower than that at the mode.

To apply an MH test, note that the SGLD proposal $q(\theta'|\theta)$  can be  considered a mixture of component kernels  $q(\theta'|\theta,\mathcal{X}_n)$ corresponding to different mini-batches. The mixture kernel will satisfy detailed balance with respect to the posterior distribution if the MH test enforces detailed balance between the posterior and each of the component kernels $q(\theta'|\theta,\mathcal{X}_n)$. Thus, we can use an MH test with
$\mu_0 = \dfrac{1}{N} \log \left[ u  \dfrac{ \rho(\theta_t) q(\theta'|\theta_t,\mathcal{X}_n)  } { \rho(\theta') q(\theta_t|\theta',\mathcal{X}_n)} \right]$.

The result of running SGLD (keeping $\alpha =5\times10^{-6}$ as before)  corrected using our approximate MH test, with $\epsilon = 0.5$, is shown in Fig.~\ref{fig:sgld_emp_eps0p5}. As expected, the MH test rejects most troublesome jumps into the valley because the density in the valley is much lower than that at the mode. The  stationary distribution is almost indistinguishable from the true posterior. Note that when $\epsilon = 0.5$, a decision is always made in the first step (using just $m = 500$ datapoints) without querying additional data sequentially.

\begin{figure}[ht]
  \centering
  \subfigure[Test Average Error\label{fig:design_ica_error}]{
    \includegraphics[width=.4\textwidth]{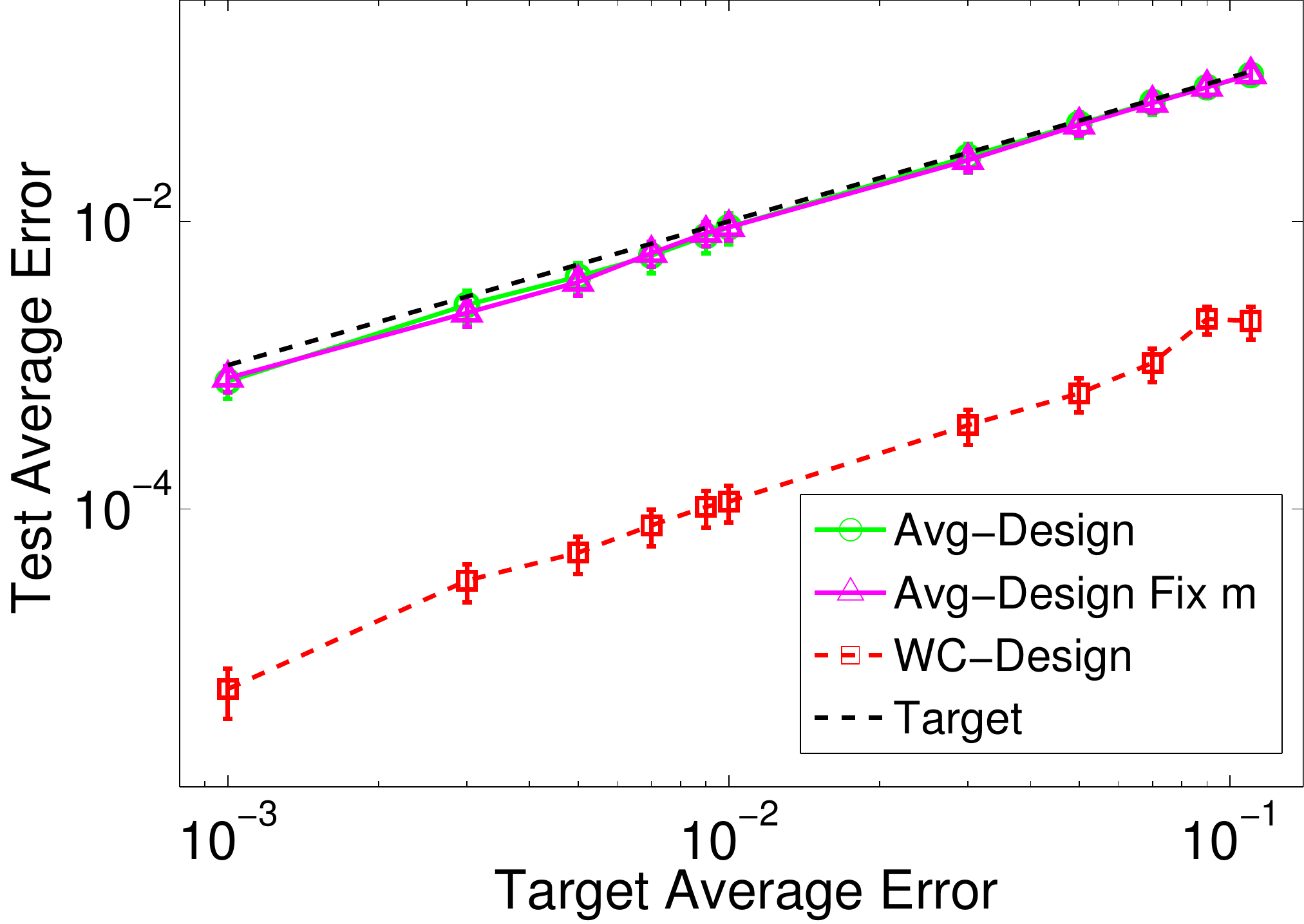}
    \label{fig:design_ica_error}
  }
  \subfigure[Average Data Usage\label{fig:design_ica_ratio}]{
    \includegraphics[width=.4\textwidth]{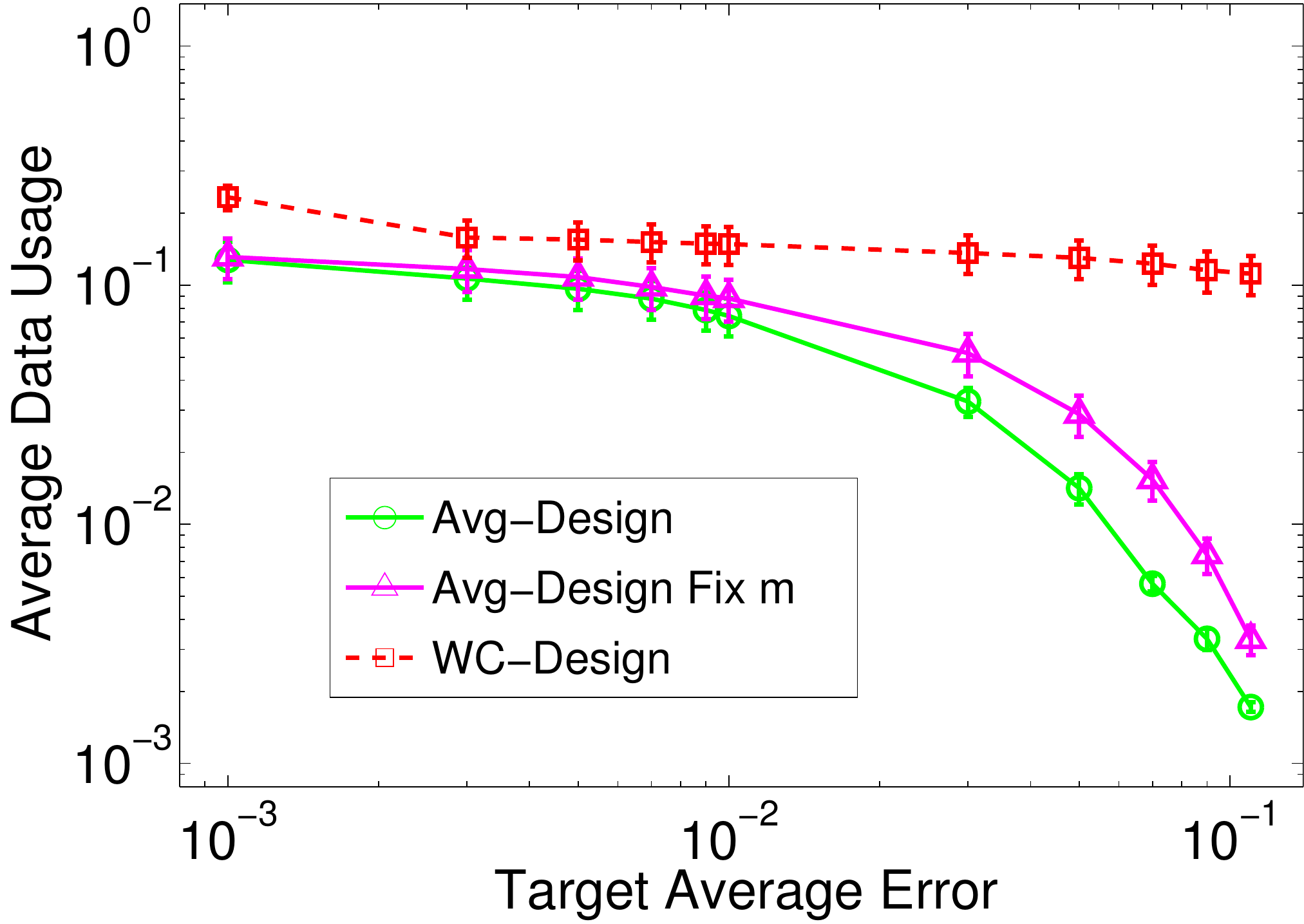}
    \label{fig:design_ica_ratio}
  }
  \caption{Test average error in $P_a$ and data usage $\eE_u[\bar{\pi}]$ for the ICA experiment using average design over both $m$ and $\epsilon$ ($\bigcirc$), with fixed $m=600$ ($\bigtriangleup$), and worst-case design ($\square$).}
  \label{fig:design}
\end{figure}

\subsection{Optimal Design of Sequential Tests}\label{sec:optdesign_exp}

We illustrate the advantages of the optimal test design proposed in Section~\ref{sec:testdesign} by applying it to the ICA experiment described in Section~\ref{sec:exp_ica}. We consider two design methods: the `average design' (Eqn.~\ref{eqn:avg_design})  and the `worst-case design'  (Eqn.~\ref{eqn:worstcase_design}). For the average design, we collected 100 samples of the Markov chain to approximate the expectation of the error over $(\theta,\theta')$. We will call these samples the training set. The worst case design does not need the training set as it does not involve the distribution of $(\theta,\theta')$. We compute the optimal $m$ and $\epsilon$ using grid search, for different values of the target training error, for both designs. We then collect a new set of 100 samples $(\theta,\theta')$ and measure the average error and data usage on this test set (Fig.~\ref{fig:design}).

For the same target error on the training set, the worst-case design gives a conservative parameter setting that achieves a much smaller error on the test set. In contrast, the average design achieves a test error that is almost the same as the target error (Fig.~\ref{fig:design_ica_error}). Therefore, it uses much less data than the worst-case design (Fig.~\ref{fig:design_ica_ratio}). 

We also analyze the performance in the case where we fix $m=600$ and only change $\epsilon$. This is a simple heuristic we recommended at the beginning of Section~\ref{sec:testdesign}. Although this usually works well, using the optimal test design ensures the best possible performance.  In this experiment, we see that when the error is large, the optimal design uses only half the data (Fig.~\ref{fig:design_ica_ratio}) used by the heuristic and is therefore twice as fast. 

\section{Conclusions and Future Work}\label{sec:conclusion}

We have taken a first step towards cutting the computational budget of the Metropolis-Hastings MCMC algorithm, which takes $O(N)$ likelihood evaluations to make the binary decision of accepting or rejecting a proposed sample. In our approach, we compute the probability that a new sample will be accepted based on a subset of the data. We increase the cardinality of the subset until a prescribed confidence level is reached. In the process we create a bias, which is more than compensated for by a reduction in variance due to the fact that we can draw more samples per unit time. Current MCMC procedures do not take these trade-offs into account. In this work we use a fixed decision threshold for accepting or rejecting a sample, but in theory a better algorithm can be obtained by adapting this  threshold over time. An adaptive algorithm can tune bias and variance contributions in such a way that at every moment our risk (the sum of squared bias and variance) is as low as possible. We leave these extensions for future work.

\section*{Acknowledgments} 

We thank Alex Ihler, Daniel Gillen, Sungjin Ahn and Babak Shahbaba for their valuable suggestions. This material is based upon work supported by the National Science Foundation under Grant No. 1216045.

\appendix

\section{Distribution of the test statistic} \label{sec:gaussian_process}

In the sequential test, we first compute the test statistic from a mini-batch of size $m$. If a decision cannot be made with this statistic, we keep increasing the mini-batch size by $m$ datapoints until we reach a decision. This procedure is guaranteed to terminate as explained in Section~\ref{sec:approxMH}.

The parameter $\epsilon$ controls the probability of making an error in a single test and not the complete sequential test. As the statistics across multiple tests are correlated with each other, we should first obtain the joint distribution of these statistics in order to estimate the error of the complete sequential test. Let $\bar{l}_j$ and $s_{l,j}$ be the sample mean and standard deviation respectively, computed using the first $j$ mini-batches. Notice that when the size of a mini-batch is large enough, e.g. $n>100$, the central limit theorem applies, and also $s_{l,j}$ is an accurate estimate of the population standard deviation. Additionally, since the degrees of freedom is high, the t-statistic in Eqn.~\ref{eqn:t-statistics}  reduces to a $z$-statistic. Therefore, it is reasonable to make the following assumptions: 

\begin{assumption}\label{ass:normal}
The joint distribution of the sequence $(\bar{l}_1,\bar{l}_2,\dots)$ follows a multivariate normal distribution.
\end{assumption}
\begin{assumption}\label{ass:exact_sg}
$s_{l} = \sg_l$, where $\sg_l=\mathrm{std}(\{l_i\})$ 
\end{assumption}

Fig.~\ref{fig:tstat_normality} shows that when $\mu=\mu_0$ the empirical marginal distribution of $t_j$ (or $z_j$) is well fitted by both a standard student-t and a standard normal distribution.

Under these assumptions, we state and prove the following proposition about the joint distribution of the $z$-statistic  $\bz = (z_1, z_2, \dots)$, where $z_j \defeq (\bar{l}_j - \mu_0)/\sg_l \approx t_j$, from different tests. 

\begin{proposition}\label{prop:gaussian_process}
Given Assumption~\ref{ass:normal} and~\ref{ass:exact_sg}, the sequence $\bz$ follows a \emph{Gaussian random walk process}:
\begin{equation}
P(z_j|z_1,\dots,z_{j-1})=\cN(m_j(z_{j-1}), \sg_{z,j}^2)
\end{equation}
where
\begin{align}
m_j(z_{j-1}) &= \mu_{\mathrm{std}} \dfrac{\pi_j - \pi_{j-1}}{1-\pi_{j-1}} \dfrac{1}{\sqrt{\pi_j(1-\pi_j)}} \nn\\
& + z_{j-1}\sqrt{\dfrac{\pi_{j-1}}{\pi_j}\dfrac{1-\pi_j}{1-\pi_{j-1}}} \label{eqn:cond_mean} \\
\sg_{z,j}^2 &= \dfrac{\pi_j - \pi_{j-1}}{\pi_j(1-\pi_{j-1})} \label{eqn:cond_var}
\end{align}
with $\mu_{\mathrm{std}} = \frac{(\mu - \mu_0)\sqrt{N-1}}{\sg_l}$ being the standardized mean, and $\pi_j=jm/N$ the proportion of data in the first $j$ mini-batches.
\end{proposition}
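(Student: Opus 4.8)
The plan is to strip the statement down to a fact about simple random sampling without replacement and then let Assumption~\ref{ass:normal} do the rest. Throughout, write $n_j = jm$ for the total sample size after $j$ mini-batches, so $\pi_j = n_j/N$. First I would note that, under Assumption~\ref{ass:exact_sg}, the standardization defining $z_j$ is by a \emph{deterministic} quantity: the estimated standard deviation of $\bar{l}_j$ from Eqn.~\ref{eqn:estimated_std} equals the true value $\sigma_{\bar{l},j} := \sqrt{\operatorname{Var}(\bar{l}_j)} = \frac{\sigma_l}{\sqrt{N-1}}\sqrt{(1-\pi_j)/\pi_j}$. Hence $z_j = (\bar{l}_j - \mu_0)/\sigma_{\bar{l},j}$ is an affine, invertible function of $\bar{l}_j$ with fixed coefficients, so conditioning on $(z_1,\dots,z_{j-1})$ is the same as conditioning on $(\bar{l}_1,\dots,\bar{l}_{j-1})$. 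It therefore suffices to (i) show $(\bar{l}_1,\bar{l}_2,\dots)$ is a Markov chain, (ii) compute the conditional law of $\bar{l}_j$ given $\bar{l}_{j-1}$, and (iii) transport everything through the affine map, using $(\mu-\mu_0)/\sigma_{\bar{l},j} = \mu_{\mathrm{std}}\sqrt{\pi_j/(1-\pi_j)}$ at the end.

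The core of the argument is the covariance identity
\[
\operatorname{Cov}(\bar{l}_j,\bar{l}_k) = \operatorname{Var}(\bar{l}_{\max(j,k)}) \qquad \text{for all } j,k .
\]
For $j \le k$ I would prove this by writing $\bar{l}_j = S_j/n_j$ with $S_j = \sum_{i=1}^{n_j} l_{(i)}$ the running sum over the first $n_j$ draws, expanding $\operatorname{Cov}(S_j,S_k) = \sum_{i \le n_j}\sum_{i' \le n_k}\operatorname{Cov}(l_{(i)},l_{(i')})$, and inserting the standard second moments of sampling without replacement, $\operatorname{Var}(l_{(i)}) = \sigma_l^2$ and $\operatorname{Cov}(l_{(i)},l_{(i')}) = -\sigma_l^2/(N-1)$ for $i \ne i'$. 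The $n_j$ diagonal terms and the $n_j n_k - n_j$ off-diagonal terms combine, after division by $n_j n_k$, to exactly $\frac{\sigma_l^2}{n_k}\cdot\frac{N-n_k}{N-1} = \operatorname{Var}(\bar{l}_k)$; the index $j$ cancels out. This step is purely combinatorial and uses neither Gaussianity nor the value of $\mu_0$; the only place care is needed is in carrying the finite population correction.

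With this identity in hand, Assumption~\ref{ass:normal} makes $(\bar{l}_1,\bar{l}_2,\dots)$ a Gaussian sequence whose correlations satisfy $\rho_{ik} = \rho_{ij}\rho_{jk}$ for $i<j<k$ (both sides equal $\sqrt{\operatorname{Var}(\bar{l}_k)/\operatorname{Var}(\bar{l}_i)}$), which for a Gaussian sequence is exactly the Markov property, so $P(\bar{l}_j \mid \bar{l}_1,\dots,\bar{l}_{j-1}) = P(\bar{l}_j \mid \bar{l}_{j-1})$. The Gaussian conditioning formula then yields a normal law with mean $\mu + \frac{\operatorname{Var}(\bar{l}_j)}{\operatorname{Var}(\bar{l}_{j-1})}(\bar{l}_{j-1}-\mu)$ and variance $\operatorname{Var}(\bar{l}_j)\bigl(1 - \frac{\operatorname{Var}(\bar{l}_j)}{\operatorname{Var}(\bar{l}_{j-1})}\bigr)$. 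Substituting $\operatorname{Var}(\bar{l}_j) = \frac{\sigma_l^2}{N-1}\cdot\frac{1-\pi_j}{\pi_j}$ collapses the variance ratio to $\frac{\pi_{j-1}(1-\pi_j)}{\pi_j(1-\pi_{j-1})}$ and its complement to $\frac{\pi_j-\pi_{j-1}}{\pi_j(1-\pi_{j-1})}$; rewriting via $\bar{l}_j = \mu_0 + \sigma_{\bar{l},j}z_j$ and collecting the $z_{j-1}$ coefficient and the constant term reproduces Eqn.~\ref{eqn:cond_mean} and Eqn.~\ref{eqn:cond_var} verbatim. I expect the only genuine subtlety, rather than an obstacle, to be justifying the Markov reduction conceptually: a priori the $j$-th mini-batch depends on the entire \emph{set} of previously used points, not just on $\bar{l}_{j-1}$, and it is only the product structure of the correlations together with joint normality that makes $\bar{l}_{j-1}$ a sufficient summary; the remainder is bookkeeping, where the one thing to watch is not dropping a factor $\sqrt{N-1}$ when passing between $\bar{l}_j$ and $z_j$.
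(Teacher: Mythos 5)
Your proof is correct and reproduces Eqns.~\ref{eqn:cond_mean} and~\ref{eqn:cond_var} exactly, and it rests on the same two ingredients as the paper's proof: joint normality from Assumption~\ref{ass:normal} and the without-replacement second moments $\mathrm{Var}(l_k)=\sigma_l^2$, $\mathrm{Cov}(l_k,l_{k'})=-\sigma_l^2/(N-1)$. The organization, however, is genuinely different. The paper works with the per-mini-batch averages $x_j$, assembles their covariance matrix, writes $\mathbf{z}=Q(\mathbf{x}-\mu_0\mathbf{1})$ for an explicit lower-triangular $Q$, and then states that the conditional distribution ``follows straightforwardly'' from $\mathbb{E}[\mathbf{z}]$ and $Q\,\mathrm{Cov}(\mathbf{x})\,Q^{T}$ --- the conditioning step is never carried out. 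You instead work directly with the cumulative means $\bar{l}_j$ and isolate the structural identity $\mathrm{Cov}(\bar{l}_j,\bar{l}_k)=\mathrm{Var}(\bar{l}_{\max(j,k)})$, from which the correlation factorization $\rho_{ik}=\rho_{ij}\rho_{jk}$, hence the Markov property of the Gaussian sequence, hence the bivariate conditioning formula all follow mechanically. This buys two things the paper leaves implicit: an actual argument for why conditioning on the full history $(z_1,\dots,z_{j-1})$ collapses to conditioning on $z_{j-1}$ alone (the ``random walk'' claim), and a line-by-line derivation of the conditional mean and variance. You also correctly resolve the paper's loose definition of $z_j$ (it writes $(\bar{l}_j-\mu_0)/\sigma_l$ but means standardization by the standard deviation of $\bar{l}_j$, which is deterministic under Assumption~\ref{ass:exact_sg}); your reading is the one consistent with the $t$-statistic in Eqn.~\ref{eqn:t-statistics} and with the stated formulas. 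No gaps.
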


\begin{proof}[Proof of Proposition~\ref{prop:gaussian_process}]
Denote by $x_j$ the average of $m$ $l$'s in the $j$-th mini-batch. Taking into account the fact that the $l$'s are drawn without replacement, we can compute the mean and covariance of the $x_j$'s as:
\begin{align}
\eE[x_j] &= \mu \label{eqn:mean_x} \\
\mathrm{Cov}(x_i,x_j) &= \left\{ \begin{array}{ll}
\dfrac{\sg_l^2}{m}\left(1-\dfrac{m-1}{N-1}\right) & \text{, }i=j\\
-\dfrac{\sg_l^2}{N-1} & \text{, }i \neq j\\
\end{array} \right. \label{eqn:cov_x}
\end{align}
It is trivial to derive the expression for the mean. For the covariance, we first derive the covariance matrix of single data points as
\begin{align}
\mathrm{Cov}(l_k,l_{k'}) &= \eE_{k,k'}[l_k l_{k'}] - \eE_k[l_k]\eE_{k'}[l_{k'}] \nn\\
\text{if } k = k' &\nn\\
&= \overline{l_k^2} - \mu^2 \defeq \sg_l^2 \nn\\
\text{if } k \neq k' &\nn\\
&= \eE_{k\neq k'}[l_k l_{k'}] - \mu^2 \nn\\
&= \frac{1}{N(N-1)}(\sum_{k,k'}l_k l_k' - \sum_k l_k^2) - \mu^2 \nn\\
&= \frac{N}{N-1}\mu^2 - \frac{\overline{l_k^2}}{N-1} - \mu^2 \nn\\
&= -\frac{\sg_l^2}{N-1}
\end{align}

\begin{figure*}
  \centering
  \subfigure[n=500]{\includegraphics[width=0.25\textwidth]{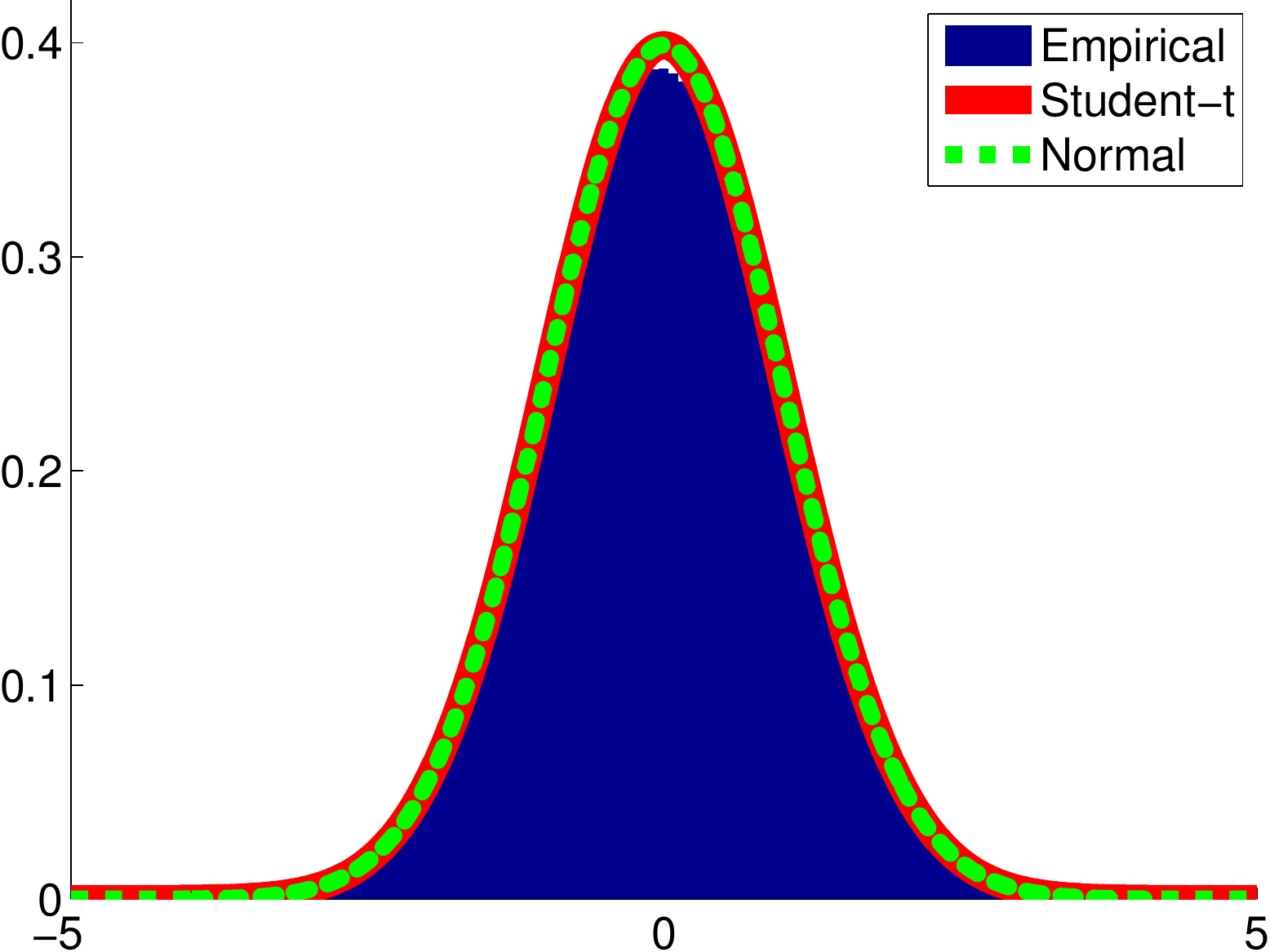}}\quad
  \subfigure[n=5000]{\includegraphics[width=0.25\textwidth]{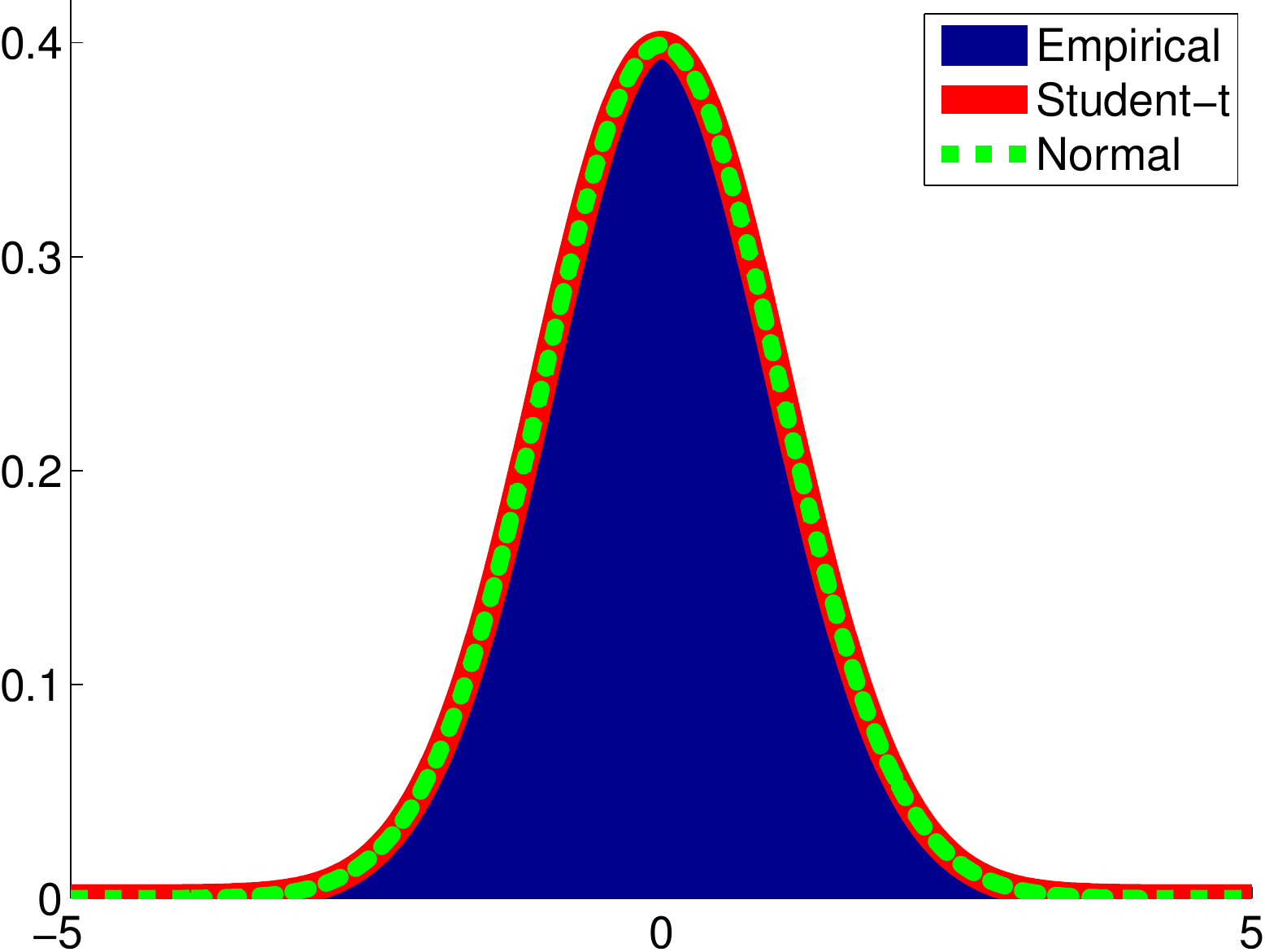}}\quad
  \subfigure[n=10000]{\includegraphics[width=0.25\textwidth]{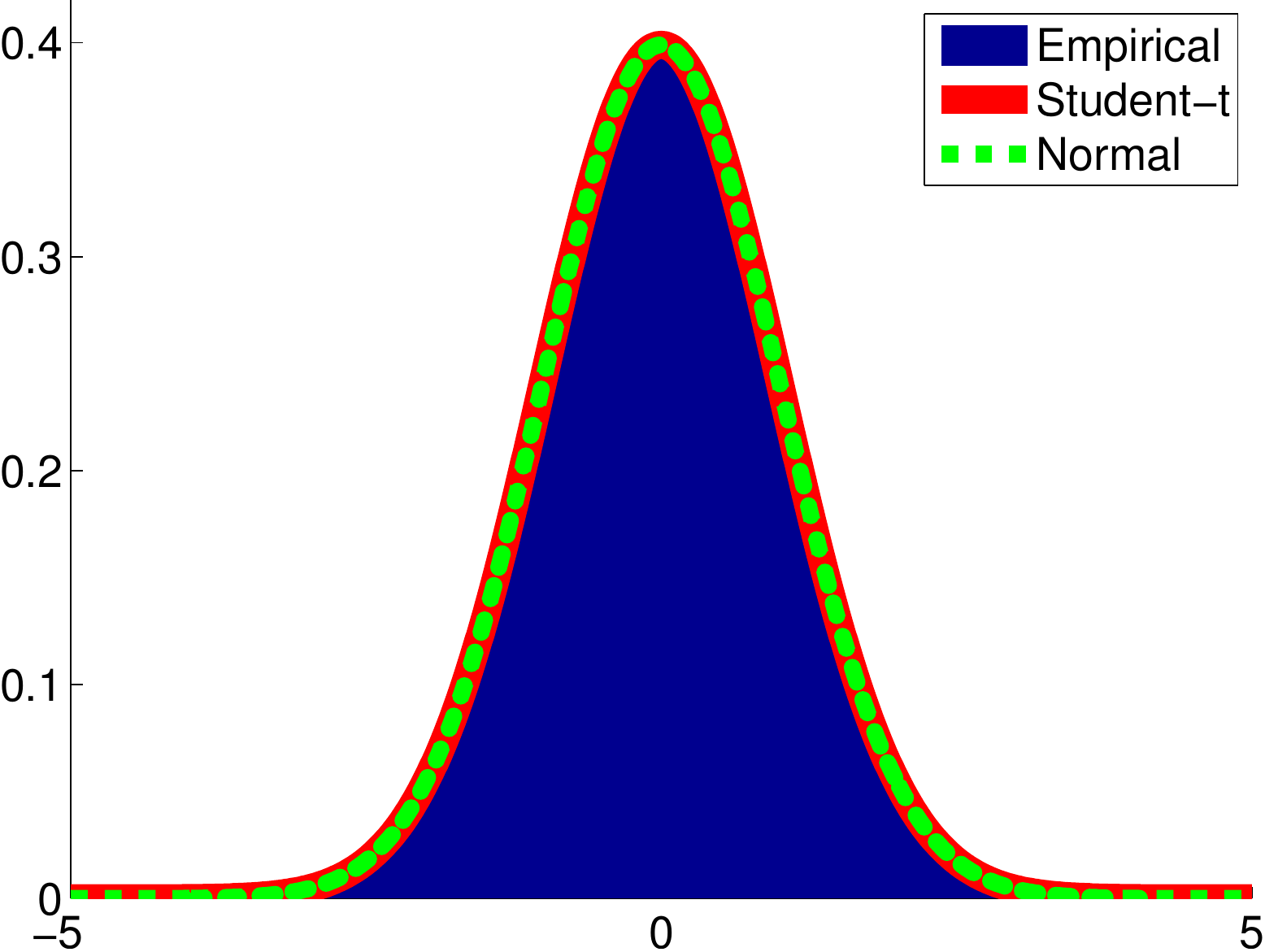}}
  \caption{Empirical distribution (blue bars) of the t-statistic under resampling $n$ datapoints without replacement from a dataset composed of digits $7$ and $9$ from the MNIST dataset (total $N = 12214$ points, mean of $l$'s is removed). Also shown are a standard normal (green dashed) and a student-t distribution with $n-1$ degrees of freedom (red solid).\label{fig:tstat_normality}}
\end{figure*}

Now, as $x_j$ can be written as a linear combination of the elements in $j$-th mini-batch as $x_j = \frac{1}{m} \boldsymbol{1}^T \boldsymbol{l}_j$, the expression for covariance in Eqn.~\ref{eqn:cov_x} follows immediately from:
\begin{equation}
\mathrm{Cov}(x_i,x_j) = \eE[x_i x_j] - \eE[x_i]\eE[x_j]
= \frac{1}{m^2}\boldsymbol{1}^T \mathrm{Cov}(\boldsymbol{l}_i \boldsymbol{l}_j^T) \boldsymbol{1}
\end{equation}

According to Assumption~\ref{ass:normal}, the joint distribution of $z_j$'s is Gaussian because $z_j$ is a linear combination of $\bar{l}_j$'s. It is however easier to derive the mean and covariance matrix of $z_j$'s by considering the vector $\bz$ as a linear function of $\bx$: $\bz = Q (\bx - \mu_0 \boldsymbol{1})$ with
\begin{equation}
Q = \left|\begin{array}{cccc}
d_1\\
& d_2 \\
& & \ddots \\
& & & d_{j}
\end{array}\right|\left|\begin{array}{cccc}
1\\
1 & 1\\
\vdots & \vdots & \ddots \\
1 & 1 & \dots & 1
\end{array}\right|
\end{equation}
where
\begin{equation}
d_j = \dfrac{\sqrt{N-1}}{j\sg_x\sqrt{\frac{N-jm}{jm}}}
\end{equation}
The mean and covariance can be computed as $\eE[\bz] = Q\boldsymbol{1}(\mu - \mu_0)$ and $\mathrm{Cov}(\bz)=Q\mathrm{Cov}(\bx)Q^T$ and the conditional distribution $P(z_j|z_1,\dots,z_{j-1})$ follows straightforwardly. We conclude the proof by plugging the definition of $\mu_{\mathrm{std}}$ and $\pi_j$ into the distribution.
\end{proof}

Fig.~\ref{fig:random_walk} shows the mean and $95\%$ confidence interval of the random walk as a function of $\pi$ with a few realizations of the $z$ sequence. Notice that as the proportion of observed data $\pi_j$ approaches 1, the mean of $z_j$ approaches infinity with a constant variance of 1. This is consistent with the fact that when we observe all the data, we will always make a correct decision.

It is also worth noting that given the standardized mean $\mu_{\mathrm{std}}$ and $\pi_j$, the process is independent of the actual size of a mini-batch $m$, population size $N$, or the variance of $l$'s $\sg_l^2$. Thus, Eqns.~\ref{eqn:cond_mean} and~\ref{eqn:cond_var} apply even if we use a different size for each mini-batch. This formulation allows us to study general properties of the sequential test, independent of any particular dataset.
%

\begin{figure}
\centering
  \includegraphics[width=.6\linewidth]{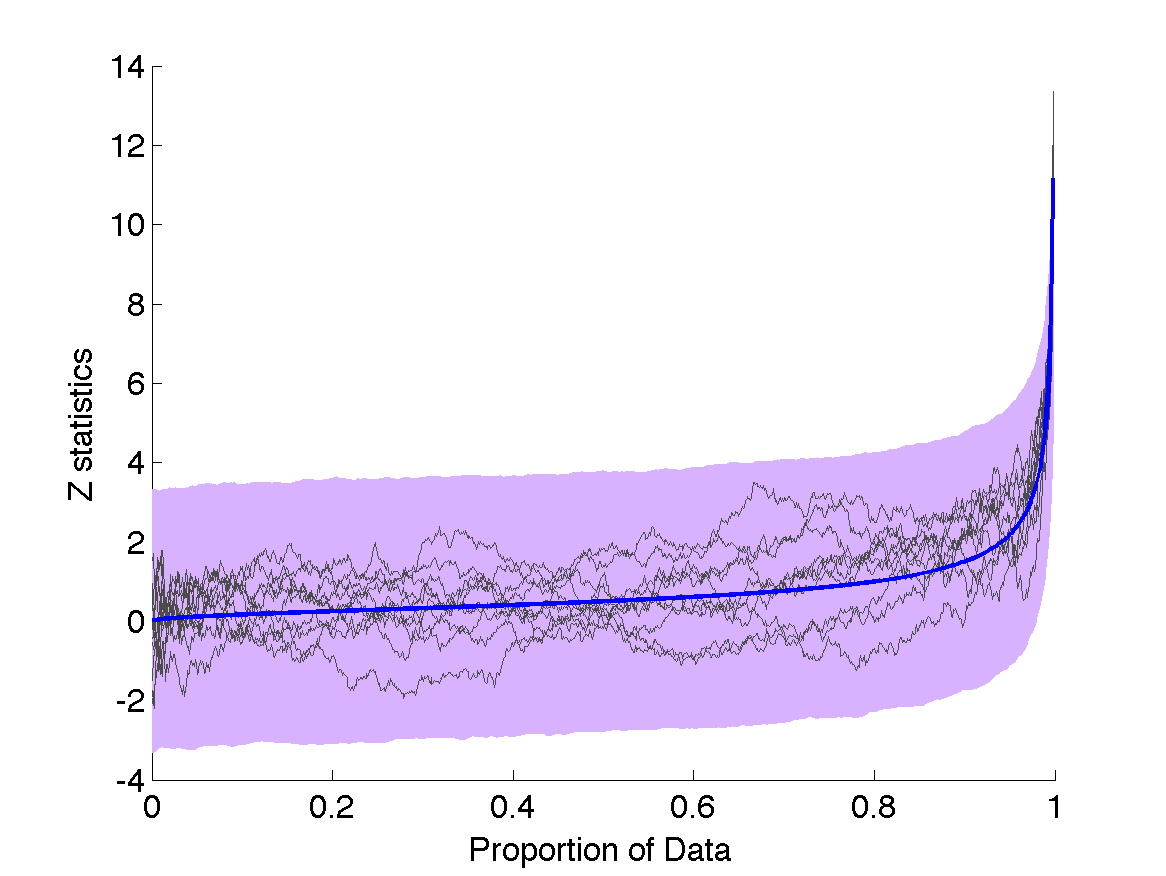}
  \caption{An example of the random walk followed by $\bz$ with $\mu_{\mathrm{std}}>0$.\label{fig:random_walk} \label{fig:random_walk}}
\end{figure}

\begin{figure}
\centering
  \includegraphics[width=.6\linewidth]{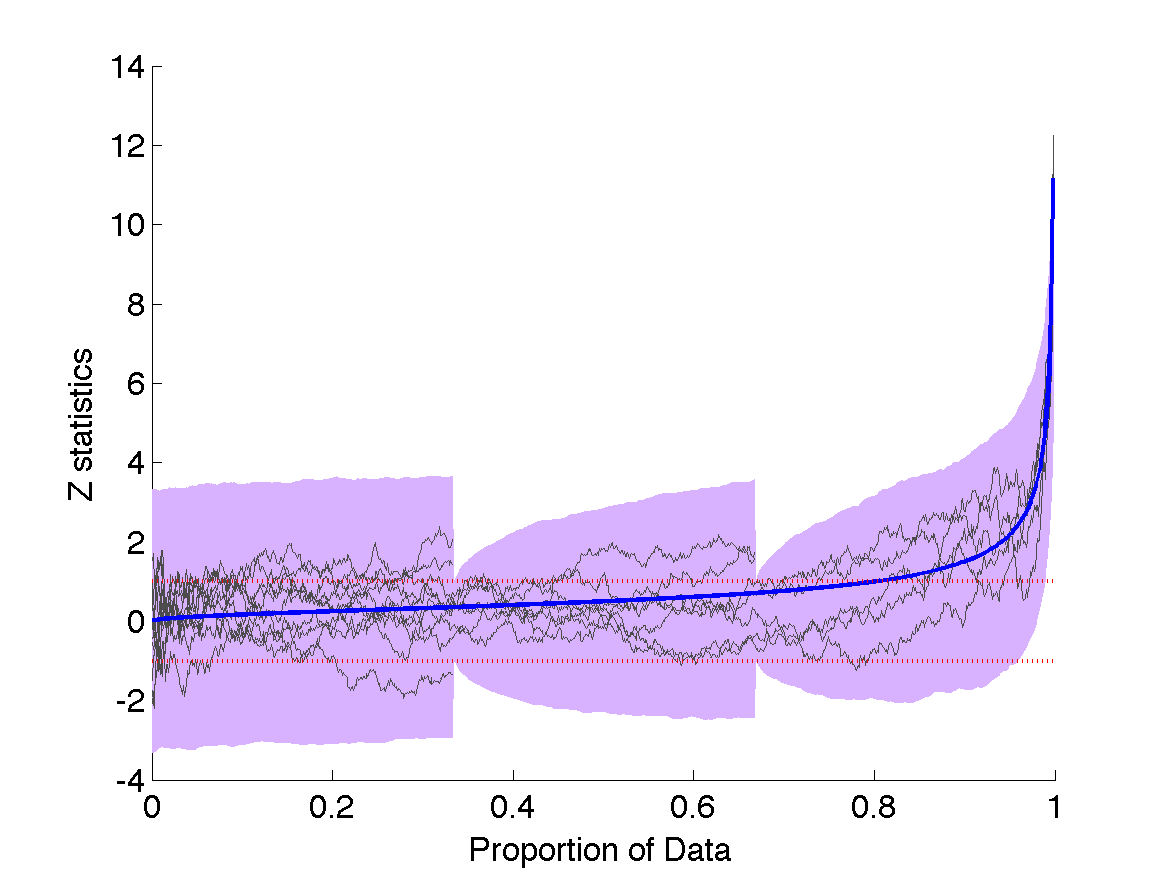}
  \caption{Sequential test with 3 mini-batches. Red dashed line is the bound $G$.\label{fig:random_walk_test} \label{fig:random_walk_test}}
\end{figure}

Applying the individual tests $\de \gtrless \epsilon \Leftrightarrow |z_j| \gtrless \Phi(1 - \epsilon) \defeq G$ at the $j$-th mini-batch corresponds to thresholding the absolute value of $z_j$ at $\pi_j$ with a bound $G$ as shown in Fig.~\ref{fig:random_walk_test}. Instead of $m$ and $\epsilon$, we will use $\pi_1 = m/N$ and $G$ as the parameters of the sequential test in the supplementary. The probability of incorrectly deciding $\mu < \mu_0$ when $\mu \geq \mu_0$ over the whole sequential test  is computed as:
\begin{equation}
\cE(\mu_{\mathrm{std}}, \pi_1, G) = \sum_{j=1}^J P(z_j < -G, |z_i|\leq G,\forall i<j) \label{eqn:type1}
\end{equation}
where $J=\lceil 1/\pi_1 \rceil$ is the maximum number of tests. Similarly the probability of incorrectly deciding $\mu \geq \mu_0$ when $\mu < \mu_0$ can be computed similarly by replacing $z_j < -G$ with $z_j > G$ in Eqn.~\ref{eqn:type1}. We can also compute the expected proportion of data that will be used in the sequential test as:
\begin{align}
&\bar{\pi}(\mu_{\mathrm{std}}, \pi_1, G) =\eE_{\bz}[\pi_{j'}] \nn\\
&= \sum_{j=1}^J \pi_jP(|z_j|>G, |z_i|\leq G,\forall i < j) \label{eqn:mean_pi}
\end{align}
where $j'$ denotes the time when the sequential test terminates.
Eqn.~\ref{eqn:type1} and~\ref{eqn:mean_pi} can be efficiently approximated together using a dynamic programming algorithm by discretizing the value of $z_j$ between $[-G,G]$. The time complexity of this algorithm is $\cO(L^2J)$ where $L$ is the number of discretized values. 

The error and data usage as functions of $\mu_{\mathrm{std}}$ are maximum in the worst case scenario when $\mu_{\mathrm{std}} \rightarrow 0 \Leftrightarrow \mu \rightarrow \mu_0$. In this case we have:
\begin{align}
& \cE(0,\pi_1,G) = \lim_{\mu_{\mathrm{std}} \rightarrow 0} \cE(\mu_{\mathrm{std}},\pi_1,G)=(1-P(j'=J))/2 \nn\\
& \defeq \cE_{\text{worst}}(\pi_1,G) \label{eqn:worst_case}
\end{align}

Figs.~\ref{fig:dynprog_vs_sim_error} and~\ref{fig:dynprog_vs_sim_ratio} show respectively that the theoretical value of the error ($\cE$) and the average data usage ($\bar{\pi}$) estimated using our dynamic programming algorithm match the simulated values. Also, note that both error and data usage drop off very fast as $\mu$ moves away from $\mu_0$.
\begin{figure}
\centering
  \includegraphics[width=.5\linewidth]{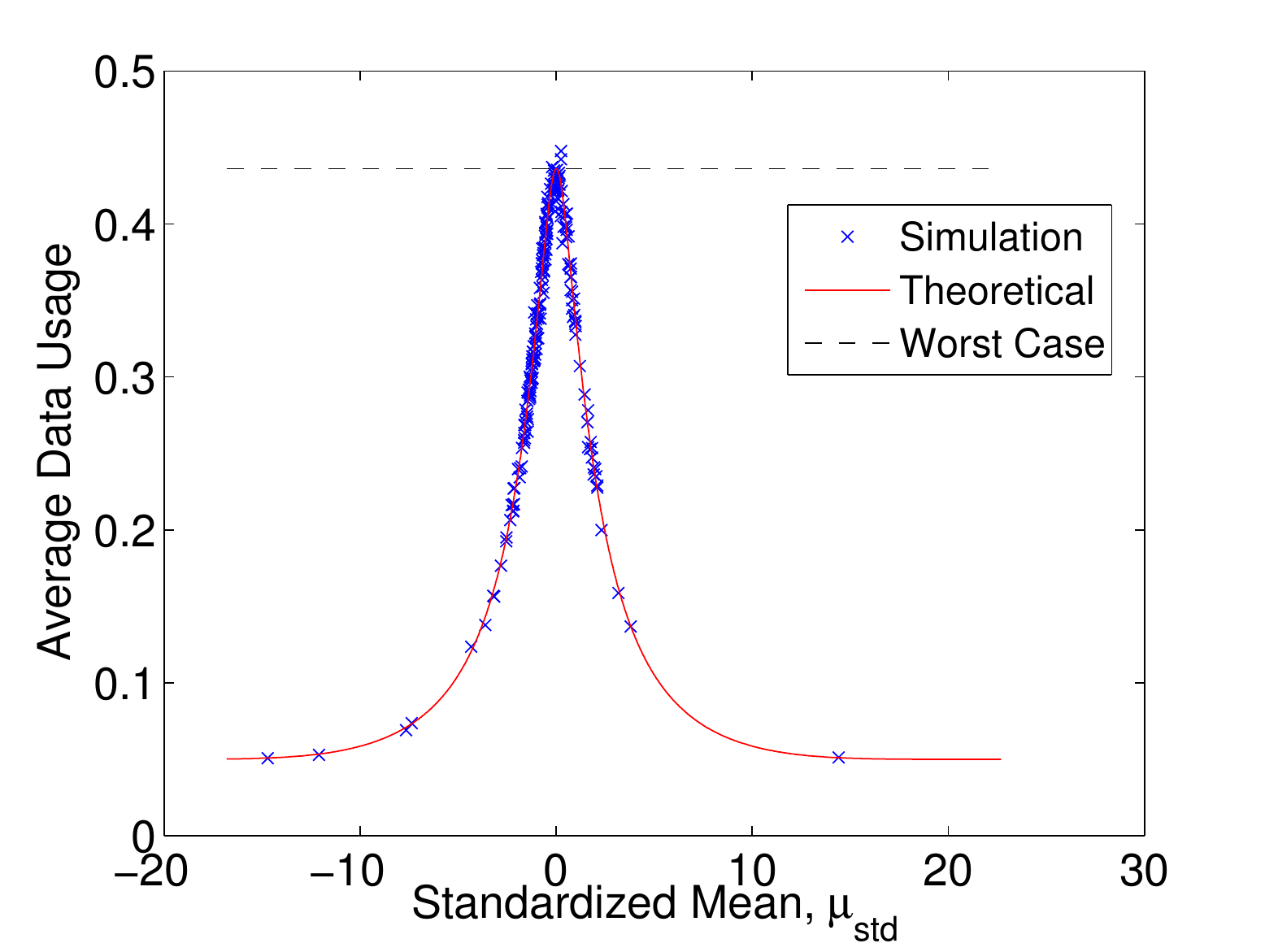}
  \caption{Average data usage $\bar{\pi}$ estimated using simulation (blue cross) and dynamic programming (red line). The worst case scenario with $\mu_{\mathrm{std}}=0$ is also shown (black dashed line). \label{fig:dynprog_vs_sim_ratio}}
\end{figure}

\section{Error in One Metropolis-Hastings Step} \label{sec:MH_step_error}
In the approximate Metropolis-Hasting test, one first draws a uniform random variable $u$, and then conducts the sequential test. As $\mu_{\mathrm{std}}$ is a function of $u$ (and $\mu$, $\sg_l$, both of which depend on $\theta$ and $\theta'$), $\cE$ measures the probability that one will make a wrong decision conditioned on $u$. One might expect that the average error in the accept/reject step of M-H using sequential test is the expected value of $\cE$ w.r.t.\ to the distribution of $u$. But in fact, we can usually achieve a significantly smaller error than a typical value of $\cE$. This is because with a varying $u$, there is some probability that $\mu > \mu_0(u)$ and also some probability that $\mu < \mu_0(u)$. Part of the error one will make given a fixed $u$ can be canceled when we marginalize out the distribution of $u$. Following the definition of $\mu_0(u)$ for M-H in Eqn.~\ref{eqn:mu_0}, we can compute the actual error in the acceptance probability as:
\begin{align}
&\Delta(\mu(\theta,\theta'), \sg_l(\theta,\theta'), \pi_1, G) = P_{a,\epsilon} - P_a \nn\\
&= \int_{0}^1 P_{\epsilon}(\mu > \mu_0(u)) \td u - \int_{0}^{P_a} \td u \nn\\
&= \int_{P_a}^1 P_{\epsilon}(\mu > \mu_0(u)) \td u - \int_{0}^{P_a} (1 - P_{\epsilon}(\mu > \mu_0(u))) \td u \nn\\
&= \int_{P_a}^1 \cE (\mu - \mu_0(u)) \td u - \int_{0}^{P_a} \cE (\mu - \mu_0(u)) \td u
\end{align}
Therefore, it is often observed in experiments (see Fig.~\ref{fig:delta_pa} for example) that when $P_a\approx 0.5$, a typical value of $\mu_{\mathrm{std}}(u)$ is close to 0, and the average value of the absolute error $|\cE|$ can be large. But due to the cancellation of errors, the actual acceptance probability $P_{a,\epsilon}$ can approximate $P_a$ very well. Fig.~\ref{fig:approximate_pa} shows the approximate $P_a$ in one step of M-H. This result also suggests that making use of some (approximate) knowledge about $\mu$ and $\sg_l$ will help us obtain a much better estimate of the error than the worst case analysis in Eqn.~\ref{eqn:worst_case}.

\begin{figure}
\centering
  \includegraphics[width=.5\linewidth]{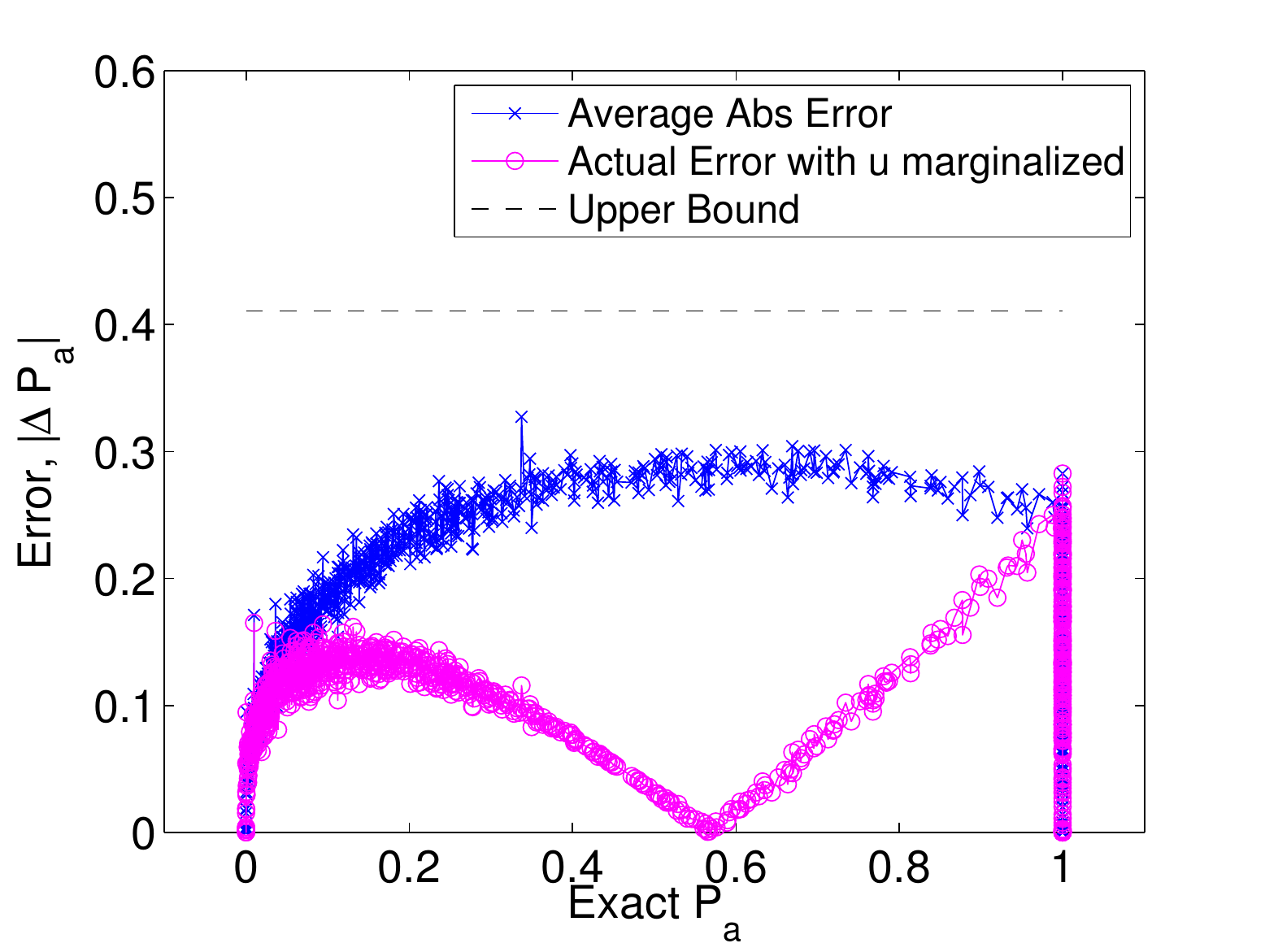}
  \caption{Error $\Delta$ in the acceptance probability (magenta circle) vs.\ exact acceptance probability $P_a$. Blue crosses are the expected value of $|\cE|$ w.r.t.\ the distribution of $u$. Black dashed line shows the upper bound.\label{fig:delta_pa}}
\end{figure}
\begin{figure}
\centering
  \includegraphics[width=.5\linewidth]{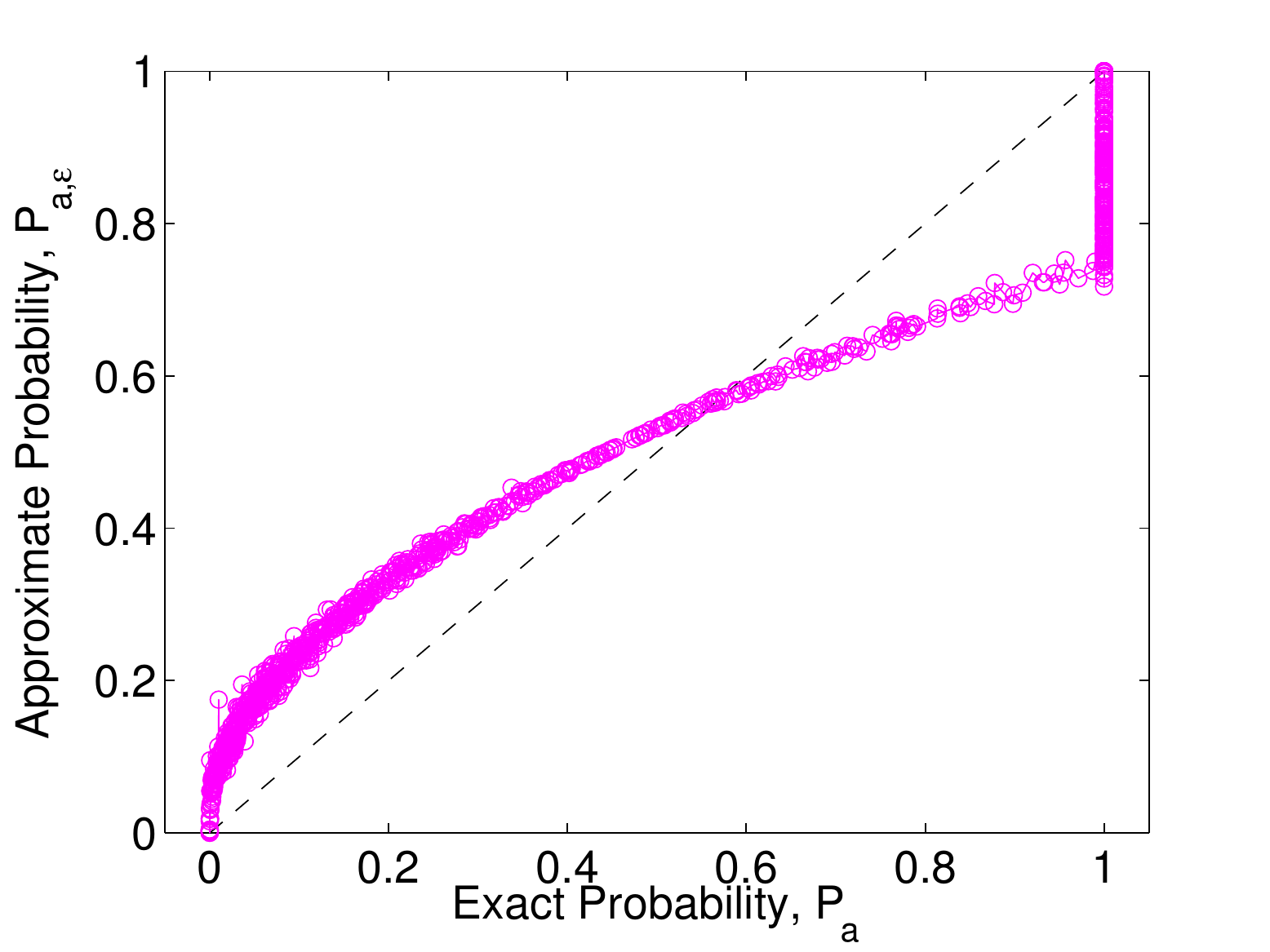}
  \caption{Approximate acceptance probability vs.\ true acceptance probability. \label{fig:approximate_pa}}
\end{figure}

\section{Proof of Theorem~\ref{thm:bound}} \label{sec:proof}

\subsection{Upper Bound Based on One Step Error}
We first prove a lemma that will be used for the proof of Theorem~\ref{thm:bound}.

\begin{lemma}\label{lem:transfer_err}
Given two transition kernels, $\mathcal{T}_0$ and $\mathcal{T}_\epsilon$, with respective stationary distributions, $\mathcal{S}_0$ and $\mathcal{S}_\epsilon$, if $\mathcal{T}_0$ satisfies the following contraction condition with a constant $\eta\in[0,1)$ for all probability distributions $P$:
\begin{equation}
d_v(P\mathcal{T}_0, \mathcal{S}_0) \leq \eta d_v(P, \mathcal{S}_0) \label{eqn:contraction}
\end{equation}
and the one step error between $\mathcal{T}_0$ and $\mathcal{T}_\epsilon$ is upper bounded uniformly with a constant $\Delta > 0$ as:
\begin{equation}
d_v(P\mathcal{T}_0, P\mathcal{T}_\epsilon) \leq \Delta, \forall P \label{eqn:one_step_error}
\end{equation}
then the distance between $\mathcal{S}_0$ and $\mathcal{S}_\epsilon$ is bounded as:
\begin{equation}
d_v(\mathcal{S}_0, \mathcal{S}_\epsilon) \leq \frac{\Delta}{1 - \eta}
\end{equation}
\end{lemma}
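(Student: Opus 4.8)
The plan is to exploit that both chains are at stationarity, $\mathcal{S}_0=\mathcal{S}_0\mathcal{T}_0$ and $\mathcal{S}_\epsilon=\mathcal{S}_\epsilon\mathcal{T}_\epsilon$, and to compare $\mathcal{S}_0$ and $\mathcal{S}_\epsilon$ through the auxiliary distribution $\mathcal{S}_\epsilon\mathcal{T}_0$, i.e.\ the result of applying one step of the \emph{exact} kernel to the \emph{approximate} stationary distribution. The triangle inequality for the total variation metric splits the target distance into one part that the contraction hypothesis~\eqref{eqn:contraction} shrinks by $\eta$ and one part that the uniform one-step bound~\eqref{eqn:one_step_error} controls by $\Delta$; solving the resulting inequality for $d_v(\mathcal{S}_0,\mathcal{S}_\epsilon)$ gives the claim.

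Concretely, I would first write
\[
d_v(\mathcal{S}_0,\mathcal{S}_\epsilon)=d_v(\mathcal{S}_0,\mathcal{S}_\epsilon\mathcal{T}_\epsilon)\le d_v(\mathcal{S}_0,\mathcal{S}_\epsilon\mathcal{T}_0)+d_v(\mathcal{S}_\epsilon\mathcal{T}_0,\mathcal{S}_\epsilon\mathcal{T}_\epsilon),
\]
using $\mathcal{S}_\epsilon\mathcal{T}_\epsilon=\mathcal{S}_\epsilon$ and then the triangle inequality. The second summand is $d_v(\mathcal{S}_\epsilon\mathcal{T}_0,\mathcal{S}_\epsilon\mathcal{T}_\epsilon)\le\Delta$ by~\eqref{eqn:one_step_error} with $P=\mathcal{S}_\epsilon$. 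The first summand is $d_v(\mathcal{S}_\epsilon\mathcal{T}_0,\mathcal{S}_0)\le\eta\,d_v(\mathcal{S}_\epsilon,\mathcal{S}_0)$ by the contraction condition~\eqref{eqn:contraction} with $P=\mathcal{S}_\epsilon$. Combining,
\[
d_v(\mathcal{S}_0,\mathcal{S}_\epsilon)\le\eta\,d_v(\mathcal{S}_0,\mathcal{S}_\epsilon)+\Delta,
\]
and since $d_v$ is bounded (at most $1$ under the normalization used here) the left-hand side is finite and may be subtracted, yielding $(1-\eta)\,d_v(\mathcal{S}_0,\mathcal{S}_\epsilon)\le\Delta$, i.e.\ $d_v(\mathcal{S}_0,\mathcal{S}_\epsilon)\le\Delta/(1-\eta)$. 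Theorem~\ref{thm:bound} then follows by instantiating $\mathcal{T}_\epsilon$ as our approximate transition kernel and taking $\Delta=\Delta_{\text{max}}=\sup_{\theta,\theta'}|\Delta(\theta,\theta')|$, which is exactly the uniform one-step bound~\eqref{eqn:one_step_error} in total variation.

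There is no genuinely hard step here; the argument is a two-term triangle inequality plus a rearrangement. The points that need a little care are: that total variation distance is a metric, so the triangle inequality is legitimate; that $\mathcal{S}_\epsilon$ is an admissible choice of $P$ in both hypotheses, which it is since they hold for \emph{all} probability distributions; and the boundedness of $d_v$ used to isolate it. The same computation can be read as a fixed-point / geometric-series estimate --- unfolding $x\le\eta x+\Delta$ repeatedly gives $x\le\eta^{n}x+\Delta\sum_{k=0}^{n-1}\eta^{k}$, and $n\to\infty$ reproduces $\Delta/(1-\eta)$ --- which is perhaps the more illuminating way to present the intuition. The real work of the theorem lies in the \emph{earlier} steps (establishing the contraction~\eqref{eqn:contraction} for the MH kernel and bounding $\Delta_{\text{max}}$ via the sequential-test error analysis); this lemma is just the bookkeeping that turns a per-step error bound into a stationary-distribution error bound.
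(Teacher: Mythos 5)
Your proof is correct, but it takes a genuinely different route from the paper's. You argue statically at the fixed point: using $\mathcal{S}_\epsilon = \mathcal{S}_\epsilon\mathcal{T}_\epsilon$, one triangle inequality through $\mathcal{S}_\epsilon\mathcal{T}_0$ gives $d_v(\mathcal{S}_0,\mathcal{S}_\epsilon) \leq \eta\, d_v(\mathcal{S}_0,\mathcal{S}_\epsilon) + \Delta$, and rearranging (legitimate since $d_v \leq 1$) yields the bound in three lines. The paper instead runs a dynamic argument: it starts the approximate chain from an arbitrary $P$, combines the same two hypotheses into the recursion $d_v(P^{(t+1)},\mathcal{S}_0) \leq \Delta + \eta\, d_v(P^{(t)},\mathcal{S}_0)$, shows by induction that the iterates enter and remain in the ball $\{Q : d_v(Q,\mathcal{S}_0) < \Delta/r\}$ for every $r < 1-\eta$, and then lets $r \to 1-\eta$. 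The trade-offs are worth noting. The paper's trajectory argument gives slightly more information --- a finite-time statement that the approximate chain is eventually trapped near $\mathcal{S}_0$ from any start --- but its final step ("since $P^{(t)}$ converges to $\mathcal{S}_\epsilon$") quietly invokes ergodicity of $\mathcal{T}_\epsilon$, an assumption not listed in the lemma. Your argument needs only what the lemma actually states (that $\mathcal{S}_\epsilon$ is \emph{a} stationary distribution of $\mathcal{T}_\epsilon$), and as a bonus your fixed-point inequality implies uniqueness of that stationary distribution whenever $\eta < 1$. Your geometric-series unfolding $x \leq \eta^n x + \Delta\sum_{k=0}^{n-1}\eta^k$ is essentially the paper's recursion in disguise, so the two proofs are close cousins; yours is the cleaner bookkeeping. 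The reduction of Theorem~\ref{thm:bound} to the lemma via $\Delta = \Delta_{\text{max}}$ also matches the paper, except that the paper additionally verifies that $|\Delta P_a| \leq \Delta_{\text{max}}$ pointwise really does imply $d_v(P\mathcal{T}_\epsilon, P\mathcal{T}_0) \leq \Delta_{\text{max}}$ for the MH kernel, a short calculation you assert rather than carry out.
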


\begin{proof}
Consider a Markov chain with transition kernel $\mathcal{T}_\epsilon$ initialized from an arbitrary distribution $P$. Denote the distribution after $t$ steps by $P^{(t)}\defeq P\mathcal{T}_{\epsilon}^{t}$. At every time step, $t\geq 0$, we apply the transition kernel $\mathcal{T}_{\epsilon}$ on $P^{(t)}$. According to the one step error bound in Eqn.~\ref{eqn:one_step_error}, the distance between $P^{(t+1)}$ and the distribution obtained by applying $\mathcal{T}_0$ to $P^{(t)}$ is upper bounded as:
\begin{equation}
d_v(P^{(t+1)}, P^{(t)} \mathcal{T}_0) = d_v(P^{(t)} \mathcal{T}_\epsilon, P^{(t)} \mathcal{T}_0) \leq \Delta \label{eqn:one_step_error_t}
\end{equation}
Following the contraction condition of $\mathcal{T}_0$ in Eqn.~\ref{eqn:contraction}, the distance of $P^{(t)} \mathcal{T}_0$ from its stationary distribution $\mathcal{S}_0$ is less than $P^{(t)}$ as
\begin{equation}
d_v(P^{(t)} \mathcal{T}_0, \mathcal{S}_0) \leq \eta d_v(P^{(t)}, \mathcal{S}_0) \label{eqn:geo_t}
\end{equation}
Now let us use the triangle inequality to combine Eqn.~\ref{eqn:one_step_error_t} and~\ref{eqn:geo_t} to obtain an upper bounded for the distance between $P^{(t+1)}$ and $\mathcal{S}_0$:
\begin{align}
d_v(P^{(t+1)}, \mathcal{S}_0) &\leq d_v(P^{(t+1)}, P^{(t)} \mathcal{T}_0) + d_v(P^{(t)} \mathcal{T}_0, \mathcal{S}_0) \nn\\
&\leq \Delta + \eta d_v (P^{(t)}, \mathcal{S}_0) \label{eqn:triangular}
\end{align}
Let $r < 1-\eta$ be any positive constant and consider the ball $\mathcal{B}(\mathcal{S}_0,\frac{\Delta}{r})\defeq \{P: d_v(P, \mathcal{S}_0) < \frac{\Delta}{r}\}$. When $P^{(t)}$ is outside the ball, we have $\Delta \leq r d_v(P^{(t)}, S)$. Plugging this into Eqn.~\ref{eqn:triangular}, we can obtain a contraction condition for $P^{(t)}$ towards $\mathcal{S}_0$:
\begin{align}
d_v(P^{(t+1)}, \mathcal{S}_0) &\leq (r + \eta) d_v (P^{(t)}, \mathcal{S}_0)
\end{align}

So if the initial distribution $P$ is outside the ball, the Markov chain will move monotonically into the ball within a finite number of steps. Let us denote the first time it enters the ball as $t_r$. If the initial distribution is already inside the ball, we simply let $t_r=0$. We then show by induction that $P^{(t)}$ will stay inside the ball for all $t\geq t_r$.
\begin{enumerate}
\item
At $t=t_r$,  $P^{(t)}\in \mathcal{B}(\mathcal{S}_0,\frac{\Delta}{r})$ holds by the definition of $t_r$.
\item
 Assume $P^{(t)}\in \mathcal{B}(\mathcal{S}_0,\frac{\Delta}{r})$ for some $t \geq t_r$. Then, following Eqn.~\ref{eqn:triangular}, we have
\begin{align}
&d_v(P^{(t+1)}, \mathcal{S}_0) \leq \Delta + \eta \frac{\Delta}{r} = \frac{r + \eta}{r}\Delta < \frac{\Delta}{r} \nn\\
&\implies P^{(t+1 )}\in \mathcal{B}(\mathcal{S}_0,\frac{\Delta}{r})
\end{align}
\end{enumerate}
 Therefore, $P^{(t)}\in \mathcal{B}(\mathcal{S}_0,\frac{\Delta}{r})$ holds for all $t\geq t_r$. Since $P^{(t)}$ converges to  $S_\epsilon$, it follows that:
\begin{equation}
d_v(\mathcal{S}_\epsilon, \mathcal{S}_0) < \frac{\Delta}{r}, \forall r < 1 - \eta
\end{equation}
Taking the limit $r\rightarrow 1-\eta$, we prove the lemma:
\begin{equation}
d_v(\mathcal{S}_\epsilon, \mathcal{S}) \leq \frac{\Delta}{1-\eta}
\end{equation}
\end{proof}

\subsection{Proof of Theorem~\ref{thm:bound}}
We first derive an upper bound for the one step error of the approximate Metropolis-Hastings algorithm, and then use Lemma~\ref{lem:transfer_err} to prove Theorem~\ref{thm:bound}. The transition kernel of the exact Metropolis-Hastings algorithm can be written as
\begin{equation}
\mathcal{T}_0(\theta,\theta') = P_a(\theta, \theta')q(\theta'|\theta) + (1 - P_a(\theta, \theta'))\delta_D(\theta' - \theta)
\end{equation}
where $\delta_D$ is the Dirac delta function. For the approximate algorithm proposed in this paper, we use an approximate MH test with acceptance probability $\tilde{P}_{a,\epsilon}(\theta, \theta')$ where the error, $\Delta P_a \defeq P_{a,\epsilon} - P_a$, is upper bounded as $|\Delta P_a| \leq \Delta_{\text{max}}$. Now let us look at the distance between the distributions generated by one step of the exact kernel $\mathcal{T}_0$ and the approximate kernel $\mathcal{T}_\epsilon$. For any $P$,
\begin{align}
&\int_{\theta'} d\Omega(\theta') |(P\mathcal{T}_\epsilon)(\theta') - (P\mathcal{T}_0)(\theta')| \nn\\
&= \int_{\theta'} d\Omega(\theta') \left|\int_{\theta} dP(\theta) \Delta P_a(\theta,\theta')\left( q(\theta'|\theta) - \delta_D(\theta'-\theta) \right) \right| \nn\\
&\leq \Delta_{\text{max}} \int_{\theta'} d\Omega(\theta') \left|\int_\theta dP(\theta)(q(\theta'|\theta)+\delta_D(\theta'-\theta))\right| \nn\\
&= \Delta_{\text{max}} \int_{\theta'} d\Omega(\theta') \left(g_Q(\theta') + g_P(\theta')\right) = 2\Delta_{\text{max}}
\end{align}
where $g_Q(\theta')\defeq \int_\theta dP(\theta)q(\theta'|\theta)$ is the density that would be obtained by applying one step of Metropolis-Hastings without rejection. So we get an upper bound for the total variation distance as
\begin{equation}
d_v(P\mathcal{T}_\epsilon, P\mathcal{T}_0) =\ha \int_{\theta'} d\Omega(\theta')|P\mathcal{T}_\epsilon - P\mathcal{T}_0| \leq \Delta_{\text{max}}
\end{equation}
Apply Lemma~\ref{lem:transfer_err} with $\Delta = \Delta_{\text{max}}$ and we prove Theorem~\ref{thm:bound}.

\section{Optimal Sequential Test Design} \label{sec:optimal_design}

It is possible to design optimal tests that minimize the amount of  data used while keeping the error below a given tolerance. Ideally, we want to do this based on a tolerance on the error in the stationary distribution $\cS_\epsilon$. Unfortunately, this error depends on the contraction parameter, $\eta$, of the exact transition kernel, which is difficult to compute. A more practical choice is a bound $\Delta_{\text{max}}$ on the error in the acceptance probability, since the error in $\cS_\epsilon$ increases linearly with $\Delta_{\text{max}}$. 

Given $\Delta_{\text{max}}$, we want to minimize the average data usage $\bar{\pi}$ over the parameters $\epsilon$ (or $G$) and/or $m$ (or $\pi_1$) of the sequential test. Unfortunately, the error is a function of $\mu$ and $\sg_l$ which depend on $\theta$ and $\theta'$, and we cannot afford to change the test design at every iteration.

One solution is to base the design on the upper bound of the worst case error in Eqn.~\ref{eqn:worst_case} which does not rely on $\mu_{\mathrm{std}}$. But we have shown in Section~\ref{sec:MH_step_error} that this is a rather loose bound and will lead to a very conservative design that wastes the power of the sequential test. Therefore, we instead propose to design the test by bounding the expectation of the error w.r.t. the distribution $P(\mu, \sg_l)$. This leads to the following optimization problem:
\begin{align}
&\min_{\pi_1, G} \eE_{\mu, \sg_l} \eE_{u}\bar{\pi}(\mu, \sg_l, \mu_0(u), \pi_1, G) \nn\\
\text{s.t. } & \eE_{\mu, \sg_l} |\Delta(\mu, \sg_l, \pi_1, G)| \leq \Delta_{\text{max}}
\end{align}
The expectation w.r.t.\ $u$ can be computed accurately using one dimensional quadrature. For the expectation w.r.t.\ $\mu$ and $\sg_l$, we collect a set of parameter samples $(\theta,\theta')$ during burn-in, compute the corresponding $\mu$ and $\sg_l$ for each sample, and use them to empirically estimate the expectation. We can also consider collecting samples periodically and adapting the sequential design over time. Once we obtain a set of samples $\{(\mu, \sg_l)\}$, the optimization is carried out using grid search.

We have been using a constant bound $G$ across all the individual tests. This is known as the Pocock design \citep{pocock1977group}. A more flexible sequential design can be obtained by allowing $G$ to change as a function of $\pi$. \cite{wang1987approximately} proposed a bound sequence $G_j = G_0 \pi_j^{0.5-\alpha}$ where $\alpha\in[0.5, 1]$ is a free parameter. When $\alpha=0$, it reduces to the Pocock design, and when $\alpha = 1$, it reduces to O'Brien-Fleming design \citep{o1979multiple}. We can adopt this more general form in our optimization problem straightforwardly, and the grid search will now be conducted over three parameters, $\pi_1$, $G_0$, and $\alpha$.

\section{Reversible Jump MCMC}\label{sec:rjmcmc_sup}

We give a more detailed description of the different transition moves used in experiment~\ref{sec:exp_rjmcmc}. The update move is the usual MCMC move which involves changing the parameter vector $\beta$ without changing the model $\gamma$. Specifically, we randomly pick an active component $j:\gamma_j = 1$ and set $\beta_j = \beta_j + \eta$ where $\eta \sim \mathcal{N}(0,\sigma_{update})$. The birth move involves (for $k<D$) randomly picking an inactive component $j:\gamma_j = 0$ and setting $\gamma_j = 1$. We also propose a new value for $\beta_j \sim \mathcal{N}(0,\sigma_{birth})$. The birth move is paired with a corresponding death move (for $k>1$) which involves randomly picking an active component $j:\gamma_j = 1$ and setting $\gamma_j = 0$. The corresponding $\beta_j$ is discarded. The probabilities of picking these moves $p(\gamma \rightarrow \gamma')$ is the same as in \cite{chen2011bayesian}. The value of $\mu_0$ used in the MH test for different moves is given below. \\
1. Update move:
\begin{equation}
\mu_0 = \frac{1}{N} \log \left[ u \frac{\| \beta \|_1^{-k}} {\| \beta' \|_1^{-k}}\right]
\end{equation}
2. Birth move:
\begin{equation}
\mu_0 = \frac{1}{N} \log \left[ u \frac{\| \beta \|_1^{-k} p(\gamma \rightarrow \gamma') \mathcal{N}(\beta_j|0,\sigma_{birth}) (D-k)} {\| \beta' \|_1^{-(k+1)} p(\gamma' \rightarrow \gamma) \lambda k } \right]
\end{equation}
2. Death move:
\begin{align}
&\mu_0 = \frac{1}{N} \times \nonumber \\ &\log \left[ u  \frac{\| \beta \|_1^{-k} p(\gamma \rightarrow \gamma')  }{\| \beta' \|_1^{-(k-1)} p(\gamma' \rightarrow \gamma) } \frac{\lambda (k-1)} { \mathcal{N}(\beta_j|0,\sigma_{birth})(D-k+1)} \right]
\end{align}

We used $\sigma_{update} = 0.01$ and  $\sigma_{birth} = 0.1$ in this experiment. As mentioned in the main text, both the exact reversible jump algorithm and our approximate version suffer from local minima. But, when initialized with the same values, we obtain similar results with both algorithms. For example, we plot the marginal posterior probability of including a feature in the model, i.e. $p(\gamma_j = 1| X_N,y_N, \lambda)$ in figure~\ref{fig:rjmcmc_incl}.

\begin{figure}
  \centering
  \subfigure{\includegraphics[scale=0.35]{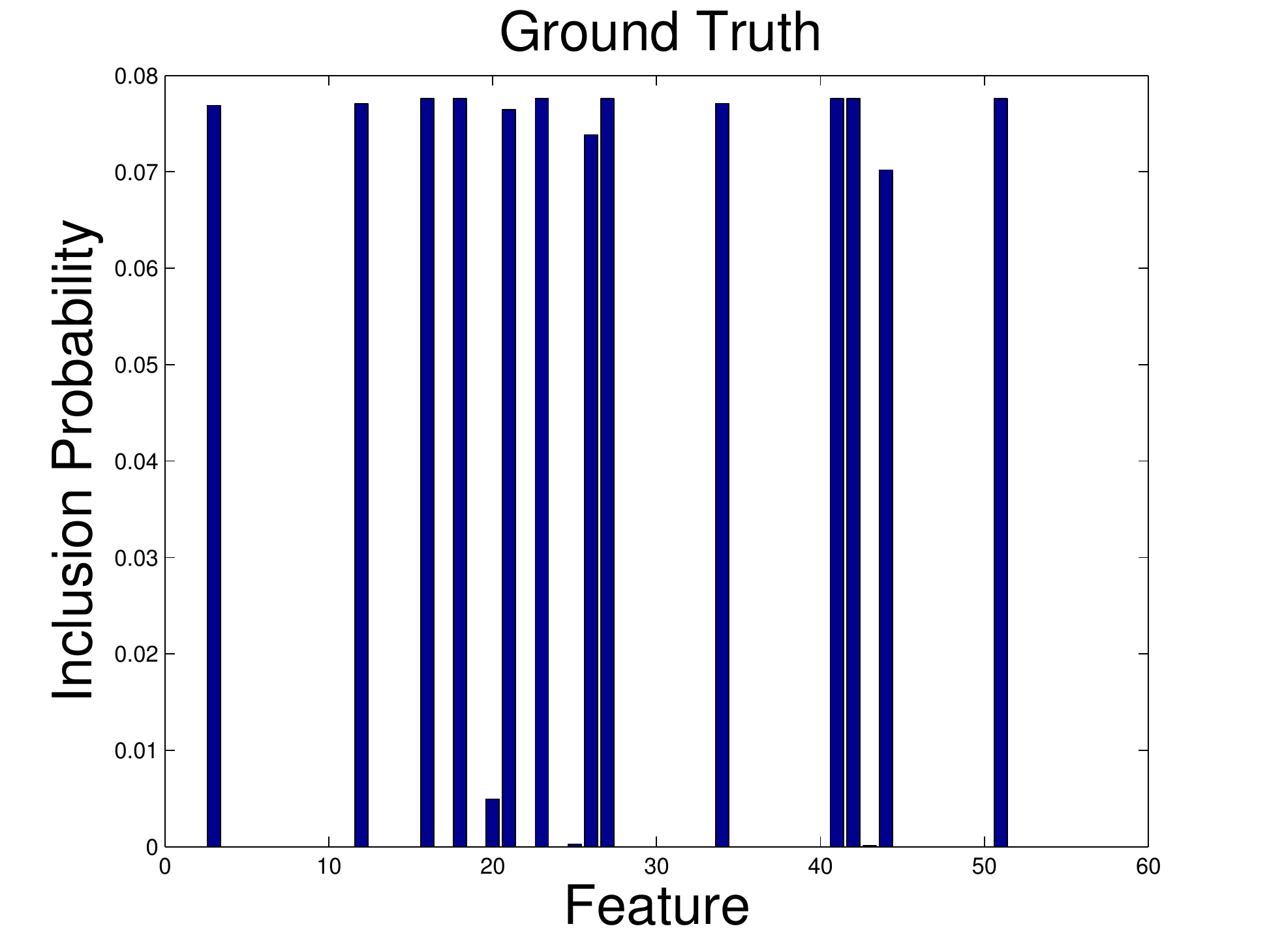}}\label{fig:rjmcmc_incl_truth}\\
{\includegraphics[scale=0.35]{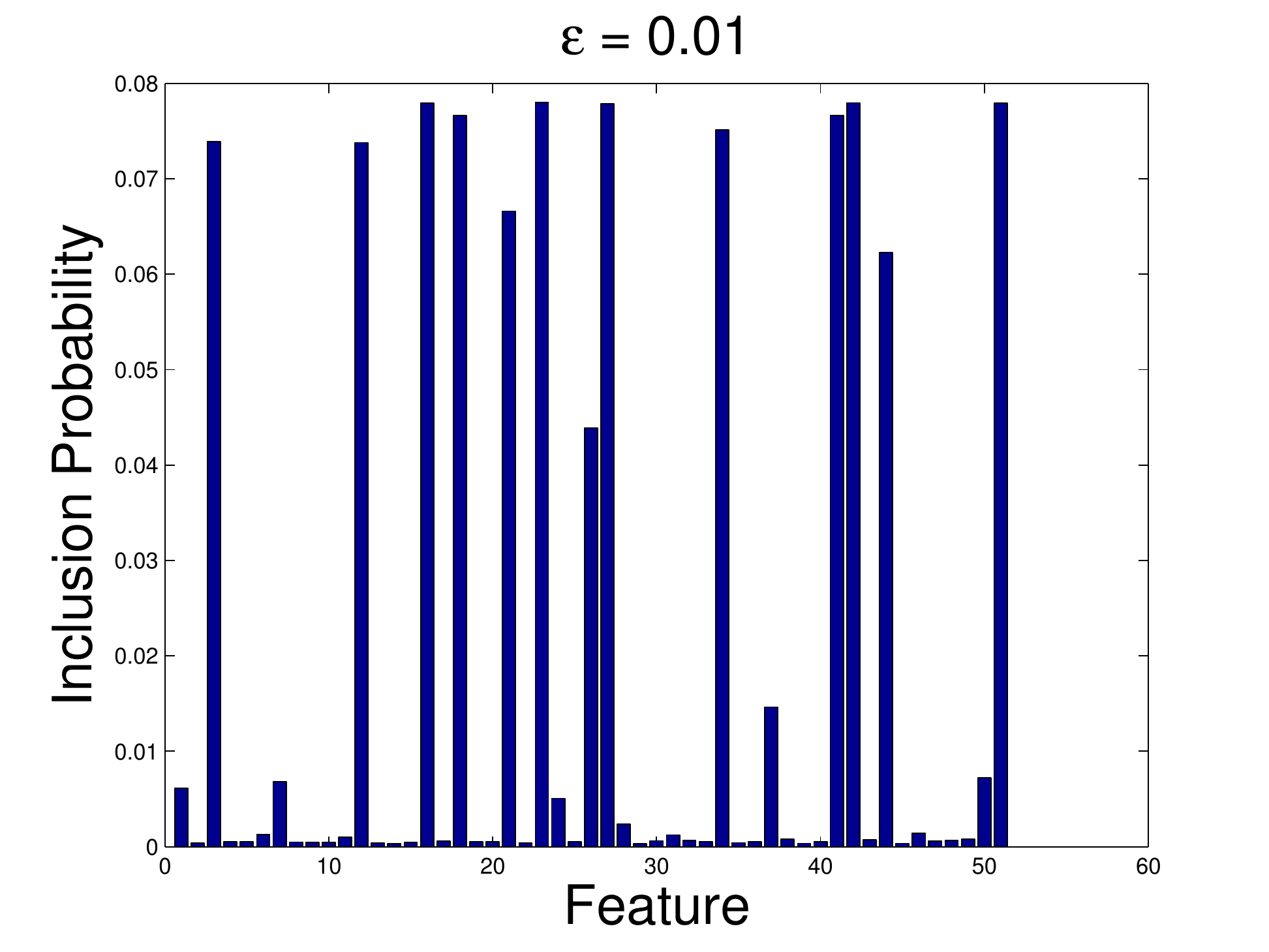}}\label{fig:rjmcmc_incl_epsp01}
  \caption{Marginal probability of features to be included in the model}
  \label{fig:rjmcmc_incl}
\end{figure}

\section{Application to Gibbs Sampling}\label{sec:gibbs}
The same sequential testing method can be applied to the Gibbs sampling algorithm for discrete models. We study a model with binary variables in this paper while the extension to multi-valued variables is also possible. Consider running a Gibbs sampler on a probability distribution over $D$ binary variables $P(X_1,\dots,X_D)$. At every iteration, it updates one variable $X_i$ using the following procedure:
\begin{enumerate}
\item
Compute the conditional probability:
\begin{equation}
P(X_i=1|x_{-i})=\frac{P(X_i=1, x_{-i})}{P(X_i=1, x_{-i})+P(X_i=0, x_{-i})}
\end{equation}
where $x_{-i}$ denotes the value of all variables other than the $i^{\text{th}}$ one.

\item
Draw $u\sim \mathrm{Uniform}[0,1]$. If $u < P(X_i=1|x_{-i})$ set $X_i=1$, otherwise set $X_i=0$.
\end{enumerate}
The condition in step 2 is equivalent to checking:
\begin{equation}
\frac{\log{u}}{\log(1-u)} < \frac{\log P(X_i=1, x_{-i})}{\log P(X_i=0, x_{-i})}
\end{equation}
When the joint distribution is expensive to compute but can be represented as a product over multiple terms, $P(X)=\prod_{n=1}^N f_n(X)$, we can apply our sequential test   to speed up the Gibbs sampling algorithm. In this case the variable $\mu_0$ and $\mu$ is given by
\begin{align}
\mu_0 &= \frac{1}{N}\frac{\log{u}}{\log(1-u)} \\
\mu &= \frac{1}{N}\sum_{n=1}^N \log \frac{f_n(X_i=1, x_{-i})}{f_n(X_i=0, x_{-i})}
\end{align}

Similar to the Metropolis-Hastings algorithm, given an upper bound in the error of the approximate conditional probability
$$
\Delta_{\text{max}} = \max_{i,x_{-i}} \left|P(X_i \text{ is assigned 1}|x_{-i}) - P(X_i=1|x_{-i})\right|
$$
we can prove the following theorem:
\begin{theorem}\label{thm:gibbs_bound}
For a Gibbs sampler with a Dobrushin coefficient $\eta\in[0,1)$ \citep[\S 7.6.2]{bremaud1999markov}, the distance between the stationary distribution and that of the approximate Gibbs sampler $S_\epsilon$ is upper bounded by
$$
d_v(S_0, S_\epsilon) \leq \frac{\Delta_{\text{max}}}{1-\eta}
$$
\end{theorem}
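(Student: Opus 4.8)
The plan is to reduce the claim to Lemma~\ref{lem:transfer_err}, exactly as in the proof of Theorem~\ref{thm:bound}. Let $\mathcal{T}_0$ denote the transition kernel of one step of the exact Gibbs sampler and $\mathcal{T}_\epsilon$ that of the approximate sampler. I would verify the two hypotheses of the lemma: (i) that $\mathcal{T}_0$ contracts toward $\mathcal{S}_0$, i.e.\ $d_v(P\mathcal{T}_0,\mathcal{S}_0)\le\eta\, d_v(P,\mathcal{S}_0)$ for every distribution $P$; and (ii) that the one-step error is uniformly bounded, $d_v(P\mathcal{T}_\epsilon,P\mathcal{T}_0)\le\Delta_{\text{max}}$ for every $P$. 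Once both hold, applying Lemma~\ref{lem:transfer_err} with $\Delta=\Delta_{\text{max}}$ gives $d_v(\mathcal{S}_0,\mathcal{S}_\epsilon)\le\Delta_{\text{max}}/(1-\eta)$, which is the theorem.

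For step (i) I would appeal to the standard contraction property of the Dobrushin coefficient \citep[\S 7.6.2]{bremaud1999markov}: any Markov kernel $K$ with Dobrushin coefficient $\eta$ satisfies $d_v(\mu K,\nu K)\le\eta\, d_v(\mu,\nu)$ for all $\mu,\nu$. Taking $\nu=\mathcal{S}_0$ and using the invariance $\mathcal{S}_0\mathcal{T}_0=\mathcal{S}_0$ recovers the contraction condition Eqn.~\ref{eqn:contraction}. I would take $\mathcal{T}_0$ to be one random-scan update step --- choose $i\in\{1,\dots,D\}$ uniformly, then resample $X_i$ from $P(X_i\mid x_{-i})$ --- since this is the kernel for which assuming $\eta<1$ is natural; a single fixed-coordinate update leaves the other variables untouched and hence always has Dobrushin coefficient $1$.

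For step (ii) I would exploit the mixture structure of the Gibbs kernel. The update of coordinate $i$ from state $x$ is the two-atom measure placing mass $P(X_i=1\mid x_{-i})$ on the configuration agreeing with $x$ except that $X_i=1$, and mass $P(X_i=0\mid x_{-i})$ on the one with $X_i=0$; the approximate update $\tilde K_i$ is the same measure with $P(X_i=1\mid x_{-i})$ replaced by the assignment probability $\tilde P(X_i\text{ is assigned }1\mid x_{-i})$ produced by the sequential test, which by definition of $\Delta_{\text{max}}$ differs from it by at most $\Delta_{\text{max}}$. Since the two measures are supported on the same pair of atoms, $d_v(\tilde K_i(x,\cdot),K_i(x,\cdot))\le\Delta_{\text{max}}$ for every $x$. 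Writing $\mathcal{T}_0=\frac{1}{D}\sum_i K_i$ and $\mathcal{T}_\epsilon=\frac{1}{D}\sum_i\tilde K_i$, I would then use joint convexity of the total variation distance --- first over the coordinate index, then over the starting state averaged against $P$ --- to get $d_v(P\mathcal{T}_\epsilon,P\mathcal{T}_0)\le\Delta_{\text{max}}$. As in the Metropolis--Hastings case, $\tilde P$ is well defined because the sequential test is guaranteed to terminate, at the latest once all $N$ factors $f_n$ have been used.

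I expect the only delicate point --- and the only place this departs from a verbatim copy of the Theorem~\ref{thm:bound} argument --- to be the choice of what counts as ``one step.'' For a deterministic systematic sweep $\mathcal{T}_0=K_1K_2\cdots K_D$ the telescoping estimate only yields $d_v(P\mathcal{T}_\epsilon,P\mathcal{T}_0)\le D\,\Delta_{\text{max}}$, and hence the weaker bound $D\,\Delta_{\text{max}}/(1-\eta)$; it is the convex, random-scan form of a single update that both keeps the one-step error at $\Delta_{\text{max}}$ and permits a contraction coefficient $\eta<1$, and this is what produces the clean constant stated in the theorem. Everything else is a routine transcription of the Metropolis--Hastings proof.
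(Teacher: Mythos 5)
Your proposal is correct and follows essentially the same route as the paper: verify the two hypotheses of Lemma~\ref{lem:transfer_err} --- the Dobrushin contraction for the exact kernel and a uniform one-step error bound of $\Delta_{\text{max}}$ --- and then invoke the lemma. Your derivation of the one-step bound via the two-atom structure of a coordinate update and convexity of total variation is an equivalent repackaging of the paper's direct computation $\frac{1}{2}\sum_y |\sum_{x_i} P(x_i, y_{-i}) \Delta P(x_i|y_{-i})| \leq \Delta_{\text{max}}$.

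One point where you are actually more careful than the paper: you correctly observe that a single fixed-coordinate update $\mathcal{T}_{0,i}$ cannot have Dobrushin coefficient $\eta<1$ (it leaves $x_{-i}$ untouched), so the kernel to which both the contraction hypothesis and the one-step error bound are applied must be the random-scan mixture $\frac{1}{D}\sum_i K_i$, for which convexity preserves the $\Delta_{\text{max}}$ bound; a systematic sweep would only give $D\,\Delta_{\text{max}}$ by telescoping. The paper's proof establishes the one-step bound per coordinate and then cites the contraction condition for ``a Gibbs sampling algorithm'' without resolving this mismatch, so your reading is the one that makes the stated constant go through. This is a clarification of the paper's argument rather than a different proof.
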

\begin{proof}
The proof is similar to that of Theorem~\ref{thm:bound}. We first obtain an upper bound for the one step error and then plug it into Lemma~\ref{lem:transfer_err}.

The exact transition kernel of the Gibbs sampler for variable $X_i$ can be represented by a matrix $\mathcal{T}_{0,i}$ of size $2^D\times 2^D$:
\begin{equation}
\mathcal{T}_{0,i}(x, y) = \left\{\begin{array}{rl}
0 & \mbox {if } x_{-i} \neq y_{-i} \\
P(Y_i = y_i | y_{-i}) & \mbox {otherwise}
\end{array}\right.
\end{equation}
where $1\leq i\leq N, x, y \in \{0, 1\}^D$. The approximate transition kernel $\mathcal{T}_{\epsilon,i}$ can be represented similarly as
\begin{equation}
\mathcal{T}_{\epsilon, i}(x, y) = \left\{\begin{array}{rl}
0 & \mbox {if } x_{-i} \neq y_{-i} \\
P_\epsilon(Y_i = y_i | y_{-i}) & \mbox {otherwise}
\end{array}\right.
\end{equation}
where $P_\epsilon$ is the approximate conditional distribution. Define the approximation error $\Delta \mathcal{T}_i(x,y) \defeq \mathcal{T}_{\epsilon,i}(x,y) - \mathcal{T}_{0,i}(x,y)$. We know that $\Delta \mathcal{T}_i(x,y) = 0$ if $y_{-i}\neq x_{-i}$ and it is upper bounded by $\Delta_{\text{max}}$ from the premise of Theorem~\ref{thm:gibbs_bound}.

Notice that the total variation distance reduces to a half of the $L_1$ distance for discrete distributions. For any distribution $P$, the one step error is bounded as
\begin{align}
& d_v(P\mathcal{T}_{\epsilon,i}, P\mathcal{T}_{0,i})
= \ha \|P\mathcal{T}_{\epsilon,i} - P\mathcal{T}_{0,i}\|_1 \nn\\
&=\ha \sum_y \left|\sum_x P(x) \Delta \mathcal{T}(x,y)\right| \nn\\
&=\ha \sum_y \left|\sum_{x_i\in\{0,1\}} P(x_i,y_{-i}) \Delta P(x_i|y_{-i})\right| \nn\\
&\leq \ha \Delta_{\text{max}} \sum_y \left|P(Y_{-i}=y_{-i})\right| \nn\\
&= \Delta_{\text{max}}
\end{align}

For a Gibbs sampling algorithm, we have the contraction condition \citep[\S 7.6.2]{bremaud1999markov}:
\begin{equation}
d_v(P\mathcal{T} ,S) \leq \eta d_v(P,S)
\end{equation}
Plug $\Delta = \Delta_{\text{max}}$ and $\eta$ into Lemma~\ref{lem:transfer_err} and we obtain the conclusion.
\end{proof}

\subsection{Experiments on Markov Random Fields}
We illustrate the performance of our approximate Gibbs sampling algorithm on a synthetic Markov Random Field. The model under consideration has $D=100$ binary variables and they are densely connected by potential functions of three variables $\psi_{i,j,k}(X_i,X_j,X_k), \forall i\neq j\neq k$. There are $D(D-1)(D-2)/6$ potential functions in total (we assume potential functions with permuted indices in the argument are the same potential function), and every function has $2^3=8$ values. The entries in the potential function tables are drawn randomly from a log-normal distribution, $\log \psi_{i,j,k}(X_i,X_j,X_k) \sim \mathcal{N}(0, 0.02)$. To draw a Gibbs sample for one variable $X_i$ we have to compute $(D-1)(D-2)/2=4851$ pairs of potential functions as
\begin{equation}
\frac{P(X_i=1|x_{-i})}{P(X_i=0|x_{-i})}=\frac{\prod_{i\neq j\neq k}\psi_{i,j,k}(X_i=1,x_j,x_k)}{\prod_{i\neq j\neq k}\psi_{i,j,k}(X_i=0,x_j,x_k)}
\end{equation}
The approximate methods use a mini-batches of $500$ pairs of potential functions at a time. We compare the exact Gibbs sampling algorithm with approximate versions with $\epsilon \in \{0.01, 0.05, 0.1, 0.15, 0.2, 0.25\}$.

To measure the performance in approximating $P(X)$ with samples ${x_t}$, the ideal metric would be a distance between the empirical joint distribution and $P$. Since it is impossible to store all the $2^{100}$ probabilities, we instead repeatedly draw $M=1600$ subsets of $5$ variables, $\{s_m\}_{m=1}^M, s_m\subset \{1,\dots,D\}, |s_m|=5$, and compute the average $L_1$ distance of the joint distribution on these subsets between the empirical distribution and $P$:
\begin{equation}
\mathrm{Error} = \frac{1}{M}\sum_{s_m}\|\hat{P}(X_{s_m})-P(X_{s_m})\|_1
\end{equation}
The true $P$ is estimated by running exact Gibbs chains for a long time. We show the empirical conditional probability obtained by our approximate algorithms (percentage of $X_i$ being assigned $1$) for different $\epsilon$ in Fig.~\ref{fig:gibbs_cond_prob}. It tends to underestimate large probabilities and overestimate on the other end. When $\epsilon=0.01$, the observed maximum error is within $0.01$.

\begin{figure}[tb!]
\centering
\includegraphics[width=.4\linewidth]{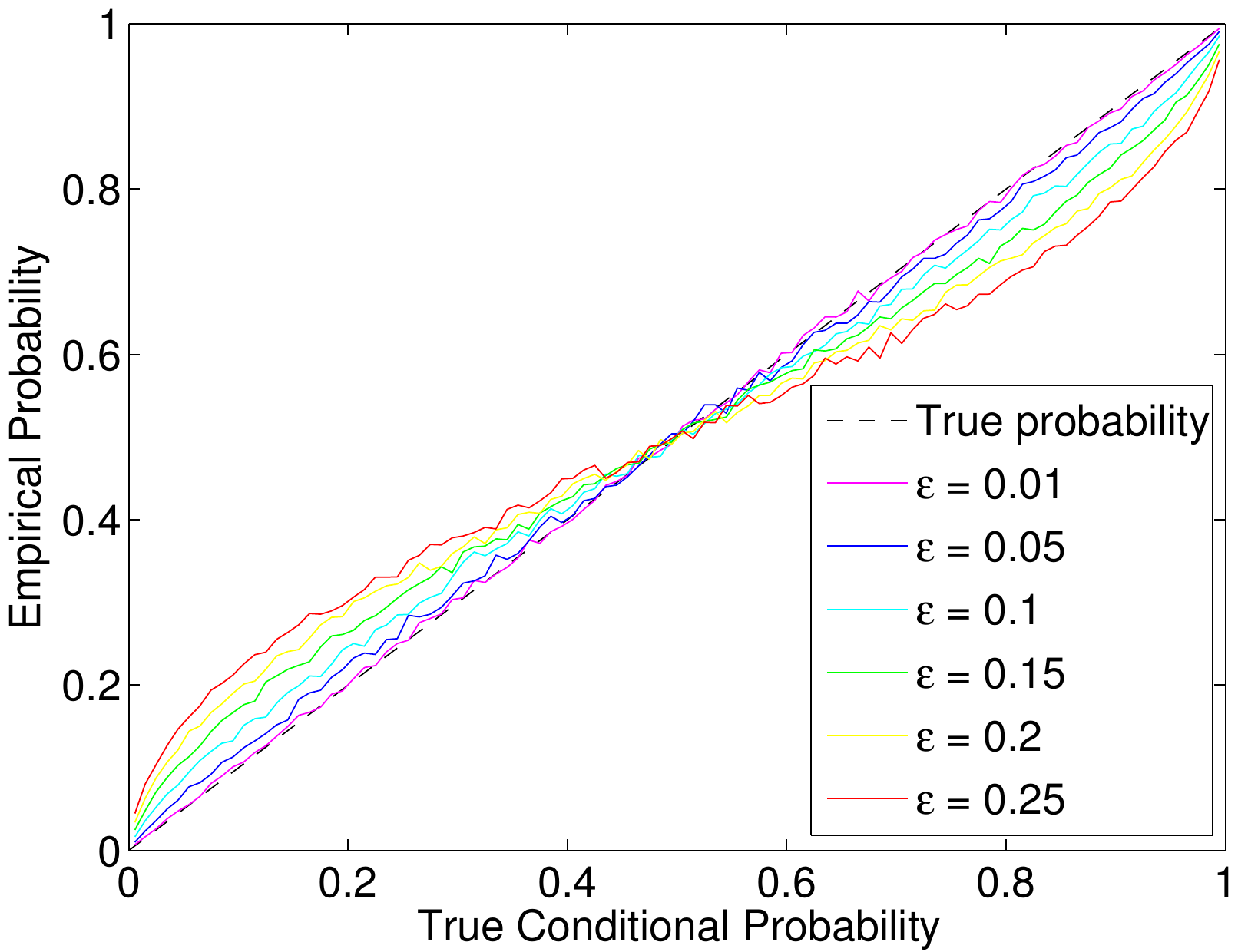}
\caption{Empirical conditional probability vs exact conditional probability for different values of $\epsilon$. The dotted black line shows the result for exact Gibbs sampling.}
\label{fig:gibbs_cond_prob}
\end{figure}
\begin{figure}[tb!]
\centering
\includegraphics[width=.4\linewidth]{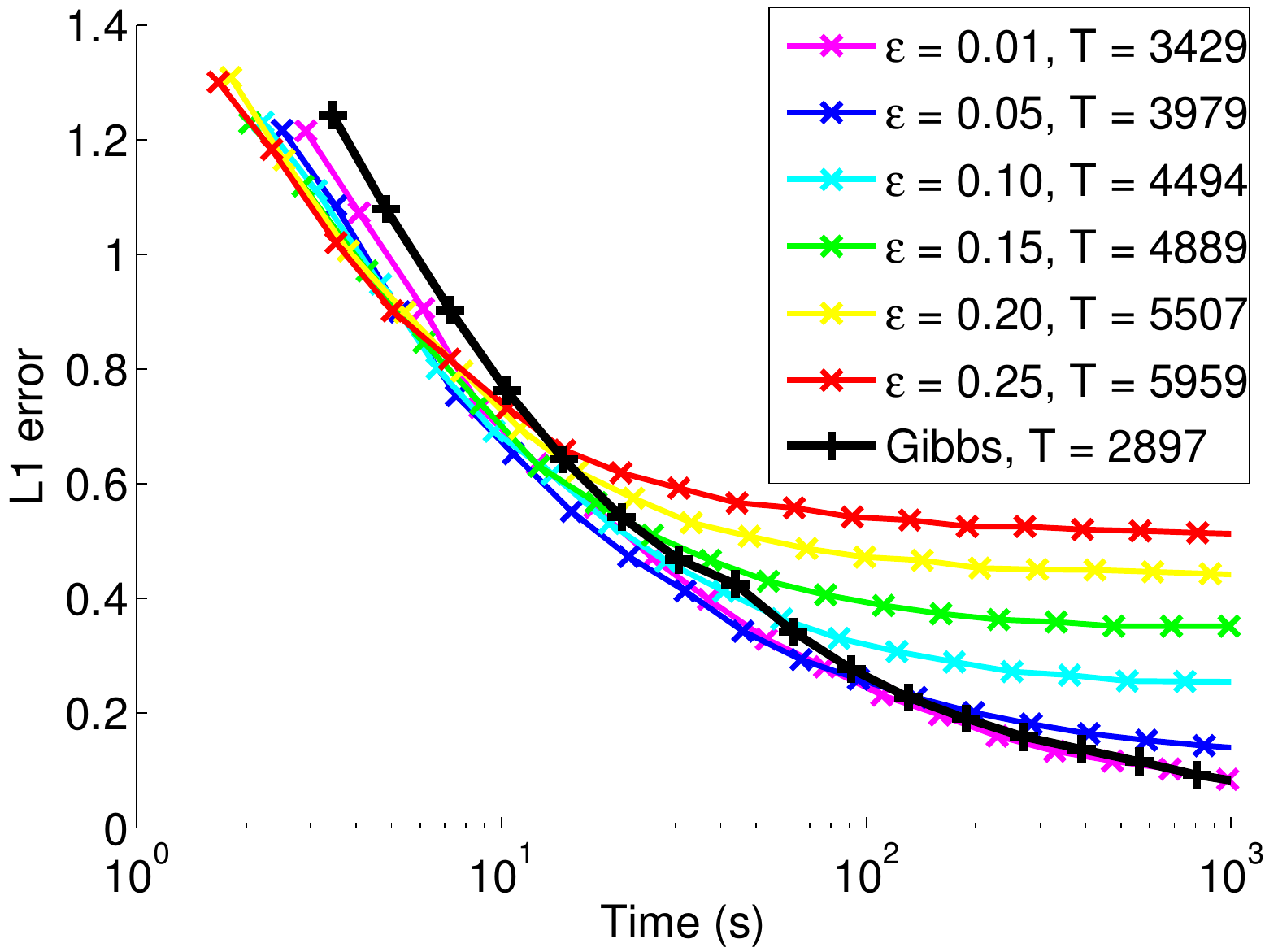}
\caption{Average $L_1$ error in the joint distribution over cliques of $5$ variables vs running time for different values of $\epsilon$. The black line shows the error of Gibbs sampler with an exact acceptance probability. $T$ in the legend indicates the number of samples obtained after $1000$ seconds.}
\label{fig:gibbs_across_t}
\end{figure}

Fig.~\ref{fig:gibbs_across_t} shows the error for different $\epsilon$ as a function of the running time. For small $\epsilon$, we use fewer mini-batches per iteration and thus generate more samples in the same amount of time than the exact Gibbs sampler. So the error decays faster in the beginning. As more samples are collected the variance is reduced. We see that these plots converge towards their bias floor while the exact Gibbs sampler out-performs all the approximate methods at around $1000$ seconds.





\bibliography{Refs}
\bibliographystyle{abbrvnat}

\end{document}